\newtheorem{definition}{Definition}
\newtheorem{theorem}{Theorem}
\newtheorem{lemma}{Lemma}
\renewcommand{\cite}{\citep}
\newcommand{\bmx}{\bm{x}}
\newcommand{\bmy}{\bm{y}}
\newcommand{\bmf}{\bm{f}}
\newcommand{\LO}{\mathrm{LO}}
\newcommand{\LOmax}{\mathrm{LO}_{\max}}
\newcommand{\TZ}{\mathrm{TZ}}
\newcommand{\Omax}{\mathrm{O}_{\max}}
\newcommand{\CZ}{\mathrm{CZ}}
\newcommand{\CZmax}{\mathrm{CZ}_{\max}}
\newcommand\rank[1]{\mathrm{\texttt{rank}}(#1)}
\newcommand\dist[1]{\mathrm{\texttt{dist}}(#1)}
  \providecommand\BibTeX{{%
    \normalfont B\kern-0.5em{\scshape i\kern-0.25em b}\kern-0.8em\TeX}}}
\title{Running Time Analysis of the Non-dominated Sorting Genetic Algorithm II (NSGA-II) using Binary or Stochastic Tournament Selection}
\author{
	Chao Bian and	Chao Qian\thanks{Chao Qian is the corresponding author.}
	\\
	State  Key Laboratory for Novel Software Technology, \\Nanjing University, Nanjing 210023, China\\
	\{bianc,qianc\}@lamda.nju.edu.cn
}
\date 
\begin{document}

\maketitle

\begin{abstract}
	Evolutionary algorithms (EAs) have been widely used to solve multi-objective optimization problems, and have become the most popular tool. However, the theoretical foundation of multi-objective EAs (MOEAs), especially the essential theoretical aspect, i.e., running time analysis, has been still largely underdeveloped. The few existing theoretical works mainly considered simple MOEAs, while the non-dominated sorting genetic algorithm II (NSGA-II), probably the most influential MOEA, has not been analyzed except for a very recent work considering a simplified variant without crossover. In this paper, we present a running time analysis of the standard NSGA-II for solving LOTZ, OneMinMax and COCZ, the three commonly used bi-objective optimization problems. Specifically, we prove that the expected running time (i.e., number of fitness evaluations) is $O(n^3)$ for LOTZ, and $O(n^2\log n)$ for OneMinMax and COCZ, which is surprisingly as same as that of the previously analyzed simple MOEAs, GSEMO and SEMO. Next, we introduce a new parent selection strategy, stochastic tournament selection (i.e., $k$ tournament selection where $k$ is uniformly sampled at random), to replace the binary tournament selection strategy of NSGA-II, decreasing the required expected running time to $O(n^2)$ for all the three problems. Experiments are also conducted, suggesting that the derived running time upper bounds are tight for LOTZ, and almost tight for OneMinMax and COCZ.
\end{abstract}

\section{Introduction}
 
 Multi-objective optimization~\cite{Steuer86}, which requires to optimize several objective functions simultaneously, arises in many areas. Since the objectives are usually conflicting, there doesn't exist a single solution which can perform well on all these objective functions. Thus, 
 the goal of multi-objective optimization is to find a set of Pareto optimal solutions (or the Pareto front), representing different optimal trade-offs between these objectives. 
 Evolutionary algorithms (EAs)~\cite{back:96} are a kind of randomized heuristic optimization algorithms, inspired by natural evolution. They maintain a set of solutions (i.e., a population), and iteratively improve the population by reproducing new solutions and selecting better ones. Due to their population-based nature, EAs are very popular for solving multi-objective optimization problems, and have been widely used in many real-world applications~\cite{coello2004applications}.
 
 Compared with practical applications, the theoretical foundation of EAs is still underdeveloped, which is mainly because the sophisticated behaviors of EAs make theoretical analysis quite difficult. Though much effort has been devoted to the essential theoretical aspect, i.e., running time analysis, leading to a lot of progresses~\cite{neumann2010bioinspired,auger2011theory,doerr-telo21-survey} in the past 25 years, most of them focused on single-objective optimization, while only a few considered the more complicated scenario of multi-objective optimization. Next, we briefly review the results of running time analyses on multi-objective EAs (MOEAs). 
 
 The running time analysis of MOEAs started from GSEMO, a simple MOEA which employs the bit-wise mutation operator to generate an offspring solution in each iteration and keeps the non-dominated solutions generated-so-far in the population. 
For GSEMO solving the bi-objective optimization problems LOTZ and COCZ, the expected running time has been proved to be $O(n^3)$~\cite{Giel03} and $O(n^2 \log n)$~\cite{Qian13,bian2018tools}, respectively, where $n$ is the problem size.
SEMO is a counterpart of GSEMO, which employs the local mutation operator, one-bit mutation, instead of the global bit-wise mutation operator. \citet{LaumannsTEC04} proved that the  expected running time of SEMO solving LOTZ and COCZ are $\Theta(n^3)$ and $O(n^2\log n)$, respectively. 
\citet{Giel10} considered another bi-objective problem OneMinMax, 
and proved that both GSEMO and SEMO can solve it in $O(n^2\log n)$ expected running time. 
\citet{doerr2013lower} also proved a lower bound $\Omega(n^2/p)$ for GSEMO solving  LOTZ, where $p<n^{-7/4}$ is the mutation rate, i.e., the probability of flipping each bit when performing bit-wise mutation.

Later, the analyses of GSEMO were conducted on multi-objective combinatorial optimization problems. 
For bi-objective minimum spanning trees (MST), GSEMO was proved to be able to find a 2-approximation of the Pareto front in expected pseudo-polynomial time~\cite{Neumann07}. 
For multi-objective shortest paths, a variant of GSEMO can achieve an $(1+\epsilon)$-approximation in expected pseudo-polynomial time~\cite{Horoba09,Neumann10}, where $\epsilon>0$. 
\citet{laumanns-nc04-knapsack} considered the GSEMO and its variant for solving a special case of the multi-objective knapsack problem, and proved that the expected running time of the two algorithms for finding all the Pareto optimal solutions are $O(n^6)$ and $O(n^5)$, respectively. 
  
  There are also studies that analyze the GSEMO for solving single-objective constrained optimization problems. By optimizing a reformulated bi-objective optimization problem that optimizes the original objective and a constraint-related objective simultaneously, the GSEMO can reduce the expected running time significantly for achieving a desired approximation ratio. For example, by reformulating the MST problem into a bi-objective problem, \citet{neumann-nc06-mst-easier} proved that GSEMO and SEMO can solve MST in $O(mn(n+\log w_{\max}))$ expected running time, which is better than the $O(m^2(\log n+\log w_{\max}))$ expected running time achieved by (1+1)-EA and RLS~\cite{neumann-tcs07-mst}, i.e., single-objective counterparts to GSEMO and SEMO, for $m=\Theta(n^2)$, where $m,n$ and $w_{\max}$ denote the number of edges, the number of nodes and the largest weight of the graph, respectively.
  More evidences have been proved on the problems of minimum cuts~\cite{neumann-algo11-cut}, set cover~\cite{friedrich-ecj10-cover} and submodular optimization~\cite{friedrich2015maximizing}. Note that we concern multi-objective optimization problems in this paper.
  
  Based on the GSEMO and SEMO, the effectiveness of  some strategies for multi-objective evolutionary optimization have been analyzed.  For example, 
   \citet{LaumannsTEC04} showed the effectiveness of greedy selection by proving that using this strategy can reduce the expected running time of SEMO from $O(n^2\log n)$ to $\Theta(n^2)$ for solving the COCZ problem.
  \citet{Qian13} showed that crossover can accelerate filling the Pareto front by comparing the expected running time of GSEMO with and without crossover for solving the artificial problems COCZ and weighted LPTNO (a generalization of LOTZ), as well as the  combinatorial problem multi-objective MST.
  The effectiveness of some other mechanisms, e.g., heuristic selection~\cite{qian-ppsn16-hyper}, diversity~\cite{plateaus10}, fairness~\cite{LaumannsTEC04,friedrich2011illustration}, and diversity-based parent selection~\cite{osuna2020diversity} have also been examined. 
  
  Though the GSEMO and SEMO share the general structure of MOEAs, they have been much simplified. To characterize the behaviors of practical MOEAs, some efforts have been devoted to analyzing MOEA/D, which is a popular MOEA based on decomposition~\cite{zhang2007moea}.
  \citet{li2015primary} analyzed a simplified variant of MOEA/D without crossover for solving COCZ and weighted LPTNO, and proved that the expected running time is $\Theta(n\log n)$ and $\Theta(n^2)$, respectively. 
  \citet{huang2021runtime} also considered a simplified MOEA/D, and examined the effectiveness of different decomposition approaches by comparing the running time for solving two many-objective problems $m$LOTZ and $m$COCZ,  where $m$ denotes the number of objectives.
 
 Surprisingly, the running time analysis of the non-dominated sorting genetic algorithm II (NSGA-II)~\cite{deb-tec02-nsgaii}, the probably most influential MOEA, has been rarely touched. The NSGA-II enables to find well-spread Pareto-optimal solutions by incorporating two substantial features, i.e., non-dominated sorting and crowding distance, and has become the most popular MOEA for solving multi-objective optimization problems~\cite{deb2011multi}.
 To the best of our knowledge, the only attempt is a very recent work, which, however, considered a simplified version of NSGA-II without crossover, and proved that the expected running time is $O(n^2 \log n)$ for OneMinMax and $O(n^3)$ for LOTZ~\cite{zheng2021first}.
  
 In this paper, we present a running time analysis for the standard NSGA-II. We prove that for NSGA-II solving LOTZ, the expected running time is $O(n^3)$; while for OneMinMax and COCZ, the expected running time is $O(n^2\log n)$. Note that these running time upper bounds are as same as that of GSEMO and SEMO~\cite{LaumannsTEC04,Giel03,Qian13,Giel10}, implying that the NSGA-II does not have advantage over simplified MOEAs on these problems if the derived upper bounds are tight.
 
 Next, we introduce a new parent selection strategy, i.e., stochastic tournament selection, which samples a number $k$ uniformly at random and then performs $k$ tournament selection. By replacing the original binary tournament selection of NSGA-II with stochastic tournament selection, we prove that the expected running time of NSGA-II can be improved to $O(n^2)$ for LOTZ, OneMinMax and COCZ. 
 We also conduct experiments to show that the derived upper bounds are almost tight.
 The goal of this work is to take a step towards analyzing the running time of practical MOEAs, and meanwhile, the introduced stochastic tournament selection strategy may be helpful in practical applications.
 
\section{Preliminaries}
In this section, we first introduce multi-objective optimization and the procedure of NSGA-II, and then present a new tournament selection strategy, i.e., stochastic tournament selection.

\subsection{Multi-objective Optimization}

Multi-objective optimization requires to simultaneously optimize two or more objective functions, as shown in Definition~\ref{def_MO}. We consider maximization here, while minimization can be defined similarly. The objectives are usually conflicting, and thus there is no canonical complete order in the solution space $\mathcal{X}$. 
The comparison between solutions relies on the \emph{domination} relationship, as presented in Definition~\ref{def_Domination}. A solution is \emph{Pareto optimal} if there is no other solution in $\mathcal{X}$ that dominates it. The set of objective vectors of all the Pareto optimal solutions constitutes the \emph{Pareto front}. The goal of multi-objective optimization is to find the Pareto front, that is, to find at least one corresponding solution for each objective vector in the Pareto front.

\begin{definition}[Multi-objective Optimization]\label{def_MO}
	Given a feasible solution space $\mathcal{X}$ and objective functions $f_1,f_2,\ldots, f_m$, multi-objective optimization can be formulated as\vspace{-0.3em}
	\begin{align}
		\max\nolimits_{\bmx \in
			\mathcal{X}}\; \big(f_1(\bmx),f_2(\bmx),...,f_m(\bmx)\big).
	\end{align}
\end{definition}
\begin{definition}[Domination]\label{def_Domination}
	Let $\bm f = (f_1,f_2,\ldots, f_m):\mathcal{X} \rightarrow \mathbb{R}^m$ be the objective vector. For two solutions $\bmx$ and $\bmy\in \mathcal{X}$:\vspace{-0.1em}
	\begin{itemize}
		\item $\bmx$ \emph{weakly dominates} $\bmy$  (denoted as $\bmx \succeq \bmy$) if $\forall 1 \leq i \leq m, f_i(\bmx) \geq f_i(\bmy)$;
		\item $\bmx$ \emph{dominates} $\bmy$ (denoted as $\bmx\succ \bmy$) if $\bm{x} \succeq \bmy$ and $f_i(\bmx) > f_i(\bmy)$ for some $i$;
		\item  $\bmx$ and $\bmy$ are \emph{incomparable} if neither $\bmx\succeq \bmy$ nor $\bmy\succeq \bmx$.
	\end{itemize}
\end{definition}

\subsection{NSGA-II}\label{sec:nsgaii}
 The NSGA-II algorithm~\cite{deb-tec02-nsgaii} as presented in Algorithm~\ref{alg:nsgaii} is a popular MOEA, which  incorporates two substantial features, i.e., non-dominated sorting in Algorithm~\ref{alg:fastsort}, and crowding distance in Algorithm~\ref{alg:crowdist}. 
 NSGA-II starts from an initial population of $N$ random solutions (line~1). In each generation, it  employs binary tournament selection $N$ times to generate a parent population $P'$ (line~4),  and then applies one-point crossover and bit-wise mutation on the $N/2$ pairs of parent solutions to generate $N$  offspring solutions (lines~5--9).  
 Note that the two adjacent selected solutions form a pair, and thus the $N$ selected solutions form $N/2$ pairs.
 The one-point crossover operator first selects a  crossover point $i\in \{1,2,\ldots,n\}$ uniformly at random,  where $n$ is the problem size, and then exchanges the first $i$ bits of two solutions. The bit-wise mutation operator flips each bit of a solution independently with probability $1/n$.
 The binary tournament selection presented in Definition~\ref{def:bin-tour} picks two solutions randomly from the population $P$ with or without replacement, and then selects a better one (ties broken uniformly). Note that we consider the strategy with replacement in this paper.
 \begin{algorithm}[t]
 	\caption{NSGA-II Algorithm~\cite{deb-tec02-nsgaii}}
 	\label{alg:nsgaii}
 	\begin{flushleft}
 		\textbf{Input}: objective functions $f_1,f_2\ldots,f_m$, population size $N$\\
 		\textbf{Output}: $N$ solutions from $\{0,1\}^n$\\
 		\textbf{Process}:
 	\end{flushleft}
 	\begin{algorithmic}[1]
 		\STATE $P\!\leftarrow\!\! N$ solutions uniformly and randomly selected from $\{0,\! 1\}^{\!n}$;
 		\WHILE{criterion is not met}
 		\STATE $Q=\emptyset$;  
 		\STATE apply binary tournament selection $N$ times to generate a parent population $P'$ of size $N$;
 		\FOR{each pair of the parent solutions $\bmx$ and $\bmy$ in $P'$}
 		\STATE apply one-point crossover on $\bmx$ and $\bmy$ to generate two solutions $\bmx'$ and $\bmy'$, with probability 0.9;
 		\STATE apply bit-wise mutation on $\bmx'$ and $\bmy'$ to generate $\bmx''$ and $\bmy''$, respectively;
 		\STATE add $\bmx''$ and $\bmy''$ into $Q$
 		\ENDFOR
 		\STATE apply Algorithm~\ref{alg:fastsort} to partition $P\cup Q$ into non-dominated sets $F_1,F_2,\ldots$;
 		\STATE let $P=\emptyset$, $i=1$;
 		\WHILE{$|P\cup F_i|<N$}
 		\STATE $P=P\cup F_i$, $i=i+1$ 
 		\ENDWHILE
 		\STATE  apply Algorithm~\ref{alg:crowdist} to assign each solution in $F_i$ with a crowding distance; 
 		\STATE sort the solutions in $F_i$ by crowding distance in descending order, and add the first $N-|P |$ solutions into $P$
 		\ENDWHILE
 		\RETURN $P$
 	\end{algorithmic}
 \end{algorithm}
 \begin{definition}[Binary Tournament Selection]\label{def:bin-tour}
 	The binary tournament selection strategy first picks two solutions from the population $P$ uniformly at random, and then selects a better one with ties broken uniformly.
 \end{definition}
  After generating $N$ offspring solutions, the best $N$ solutions in the current population $P$ and the offspring population $Q$ are selected as the population in the next generation (lines~10--16).  
 In particular,  the solutions in the current and offspring populations are partitioned into non-dominated sets $F_1,F_2,\ldots$ (line~10), where $F_1$ contains all the non-dominated solutions in $P \cup Q$, and $F_i$ ($i\ge 2$) contains all the non-dominated solutions in $(P \cup Q) \setminus \cup_{j=1}^{i-1} F_j$.  The fast implementation of not-dominated sorting is  presented in Algorithm~\ref{alg:fastsort}.   Not that we use the notion $\rank{\bmx}=i$ to denote that $\bmx$ belongs to $F_i$.
 Then, the solutions in $F_1,F_2,\ldots$ are added into the next population (lines~12--14),  until the population size exceeds $N$.   For the critical set $F_i$,  i.e., the inclusion of which can make the population size larger than $N$, Algorithm~\ref{alg:crowdist} is used to compute the crowding distance for each of the solutions in it (line~15). Finally, the solutions in $F_i$ with large crowding distance are selected to fill the remaining population slots (line~16). 
\begin{algorithm}[t]
	\caption{Fast Non-dominated Sorting~\cite{deb-tec02-nsgaii}}
	\label{alg:fastsort}
	\begin{flushleft}
		\textbf{Input}:  a population $P$\\
		\textbf{Output}: non-dominated sets $F_1,F_2,\ldots$\\
		\textbf{Process}:
	\end{flushleft}
	\begin{algorithmic}[1]
		\STATE $F_1=\emptyset$;
		\FOR{ each $\bmx \in P$}
		\STATE $S_{\bmx}=\emptyset ; n_{\bmx}=0$;
		\FOR{ each $y \in P$}
		\IF{$\bmx\succ \bmy$}
		\STATE $S_{\bmx}=S_{\bmx} \cup\{\bmy\}$
		\ELSIF {$\bmx \prec \bmy$}
		\STATE $n_{\bmx}=n_{\bmx}+1$
		\ENDIF
		\ENDFOR
		\IF{$n_{\bmx}=0$}
		\STATE $\rank{\bmx}=1 ; F_{1}=F_{1}\cup \{\bmx\}$
		\ENDIF
		\ENDFOR
		\STATE $i=1$;
		\WHILE{ $F_{i} \neq \emptyset$}
		\STATE $Q=\emptyset$;
		\FOR{ each $\bmx \in F_{i}$ }
		\FOR{ each $\bmy \in S_{\bmx}$ }
		\STATE $n_{\bmy}=n_{\bmy}-1$;
		\IF{ $n_{\bmy}=0$}
		\STATE $\rank{\bmy}=i+1$; $Q=Q \cup\{\bmy\}$
		\ENDIF
		\ENDFOR
		\ENDFOR
		\STATE $i=i+1$; $F_i=Q$
		\ENDWHILE
	\end{algorithmic}
\end{algorithm}
\begin{algorithm}[t]
	\caption{Crowding Distance Assignment~\cite{deb-tec02-nsgaii}}
	\label{alg:crowdist}
	\begin{flushleft}
		\textbf{Input}: $Q=\{\bmx^1,\bmx^2,\ldots,\bmx^l\}$ with the same rank\\
		\textbf{Output}: the crowding distance $\dist{\cdot}$ for each solution in $Q$\\
		\textbf{Process}:
	\end{flushleft}
	\begin{algorithmic}[1]
		\STATE let $\dist{\bmx^j}=0$ for any $j\in\{1,2,\ldots,l\}$;
		\FOR{$i=1$ to $m$}
		\STATE sort the solutions in $Q$ w.r.t. $f_i$ in ascending order;
		\STATE $\dist{Q[1]}=\dist{Q[l]}=\infty$;
		\FOR{$j=2$ to $l-1$}
		\STATE $\dist{Q[j]}=\dist{Q[j]}+\frac{f_i(Q[j+1])-f_i(Q[j-1])}{f_i(Q[l])-f_i(Q[1])}$
		\ENDFOR
		\ENDFOR
	\end{algorithmic}
\end{algorithm}

 When using binary tournament selection (line~4), the selection criterion is based on the crowded-comparison, that is, a solution $\bmx$ is superior to $\bmy$ (denoted as $\bmx\succ_{\mathrm{c}}\bmy$) if 
\begin{align}\label{eq:crowd-comp}
	&\rank{\bmx}<\rank{\bmy} \text{ or }\\	&\rank{\bmx}=\rank{\bmy}\wedge \dist{\bmx}>\dist{\bmy}.
\end{align}
Intuitively, the crowding distance of a solution  means the distance between its closest neighbour solutions, and a solution with larger  crowding distance is preferred so that the diversity of the population can be preserved as much as possible.
Note that in Algorithm~\ref{alg:crowdist}, we assume that the relative positions of the solutions with the same objective vector are unchanged or  totally reversed when the solutions are sorted  w.r.t. some objective function (line~3).

In line~6 of Algorithm~\ref{alg:nsgaii}, the probability of using crossover has been set to 0.9, which is the same as the original setting and also commonly used~\cite{deb-tec02-nsgaii}. However, the theoretical results derived in this paper can be directly generalized to the scenario where the  probability of using crossover belongs to $[\Omega(1),1-\Omega(1)]$.

\subsection{Stochastic Tournament Selection}
As the crowded-comparison $\succ_{\mathrm{c}}$ in Eq.~\eqref{eq:crowd-comp} actually gives a total order of the solutions in the population $P$,
binary tournament selection can be naturally extended to $k$ tournament selection~\cite{eiben-book}, as presented in Definition~\ref{def:k-tour}, where $k$ is a parameter such that $1\le k\le N$. 
That is, $k$ solutions are first picked from $P$ uniformly at random, and then the solution with the smallest rank is selected. If several solutions have the same smallest rank, the one with the largest crowding distance is selected, with ties broken uniformly. 
\begin{definition}[$k$ Tournament Selection]\label{def:k-tour}
	The $k$ tournament selection strategy first picks $k$ solutions from the population $P$ uniformly at random, and then selects the best one with ties broken uniformly.
\end{definition}
Note that a larger $k$ implies a larger selection pressure, i.e., a larger probability of selecting a good solution, and thus the value of $k$ can be used to control the selection pressure of EAs~\cite{eiben-book}.  However, this also brings about a new issue, i.e.,  how to set $k$ properly.
In order to reduce the risk of setting improper values of $k$ as well as the overhead of tuning $k$, we introduce a natural strategy, i.e.,  stochastic tournament selection in Definition~\ref{def:sto-tour}, which first selects a number $k$ randomly, and then performs the $k$ tournament selection. In this paper, we consider that the tournament candidates are picked with replacement from the population.
\begin{definition}[Stochastic Tournament Selection]\label{def:sto-tour}
	The stochastic tournament selection strategy first selects a number $k$ from $\{1,2,\ldots,N\}$ uniformly at random, where $N$ is the size of the population $P$, and then employs the $k$ tournament selection to select a solution from the  population $P$.
\end{definition}
In each generation of NSGA-II, we need to select $N$ parent solutions independently, and each selection may involve the comparison of several solutions, which may lead to a large number of comparisons. To improve the efficiency of stochastic tournament selection, we can first sort the solutions in the population $P$, and then perform the parent selection procedure. Specifically, each solution $\bmx_i$ ($1\le i\le N$) in $P$ is assigned a number $\pi(i)$, where $\pi: \{1,2,\ldots,N\}\rightarrow \{1,2,\ldots,N\}$ is a bijection such that 
\begin{equation}\label{eq:pi}
	\forall 1\le i,j\le N, i\neq j: \bmx_i\succ_{\mathrm{c}} \bmx_j\Rightarrow \pi(i)<\pi(j).
\end{equation} 
That is, a solution with a smaller number is better. 
Note that the number $\pi(\cdot)$ is assigned randomly if several solutions have the same rank and crowding distance.
Then, we sample a number $k$ randomly from $\{1,2,\ldots,N\}$ and pick $k$ solutions from $P$ at random, where the solution with the lowest $\pi(\cdot)$ value is finally selected. 

Lemma~\ref{lem:stotour-prob} presents the property of stochastic tournament selection, which will be used in the following theoretical analysis.
It shows that any solution (even the worst solution) in $P$ can be selected with probability at least $1/N^2$, and any solution belonging to the best $O(1)$ solutions in $P$ (with respect to $\succ_{\mathrm{c}}$) can be selected with probability at least $\Omega(1)$. 
Note that for binary tournament selection, the probability of selecting the worst solution (denoted as $\bmx^{\mathrm{w}}$) is $1/N^2$, because $\bmx^{\mathrm{w}}$ is selected if and only if the two solutions picked for competition are both $\bmx^{\mathrm{w}}$; the probability of selecting the best solution (denoted as $\bmx^{\mathrm{b}}$) is $1-(1-1/N)^2=2/N-1/N^2$, because $\bmx^{\mathrm{b}}$ is selected if and only if $\bmx^{\mathrm{b}}$ is picked at least once.
Thus, compared with binary tournament selection,  stochastic tournament selection can increase the probability of selecting the top solutions, and meanwhile maintaining the probability of selecting the bottom solutions. 
\begin{lemma}\label{lem:stotour-prob}
	If using stochastic tournament selection, any solution in $P$ can be selected with prob. at least $1/N^2$. 
	Furthermore, a solution $\bmx_i\in P$ with $\pi(i)\!=\!O(1)$  can be selected with prob. $\Omega(1)$, where $\pi: \{1,2,\ldots,N\}\rightarrow \{1,2,\ldots,N\}$ is  a bijection satisfying Eq.~\eqref{eq:pi}.
\end{lemma}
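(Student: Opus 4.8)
The plan is to condition on the random tournament size $k$. Fix a solution $\bmx_i$ with $\pi(i)=j$. Since $\pi$ is a bijection and the tournament winner is the drawn candidate with the smallest $\pi$-value, given that the tournament size equals $k$ the solution $\bmx_i$ wins if and only if all $k$ independent uniform draws land in $\{\bmx_\ell : \pi(\ell)\ge j\}$ but not all of them land in $\{\bmx_\ell : \pi(\ell)>j\}$. Averaging over $k$ uniform in $\{1,\dots,N\}$ this gives
\begin{equation}
\Pr[\bmx_i \text{ is selected}] \;=\; \frac1N\sum_{k=1}^{N}\left[\Big(\tfrac{N-j+1}{N}\Big)^{k}-\Big(\tfrac{N-j}{N}\Big)^{k}\right],
\end{equation}
and every summand is nonnegative because $N-j+1>N-j\ge 0$.

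For the first statement I would simply keep the $k=1$ term, which equals $\tfrac{N-j+1}{N}-\tfrac{N-j}{N}=\tfrac1N$; hence the probability is at least $\tfrac1N\cdot\tfrac1N=\tfrac1{N^2}$, uniformly over $j$ (in particular for the worst solution, $j=N$, for which all later terms vanish anyway).

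For the second statement, write $b=(N-j)/N=1-j/N$ and use $a^k-b^k=(a-b)\sum_{t=0}^{k-1}a^tb^{k-1-t}\ge (a-b)\,k\,b^{k-1}$ with $a-b=1/N$ to bound the probability below by $\tfrac1{N^2}\sum_{k=1}^{N}k\,(1-j/N)^{k-1}$. This truncated arithmetic–geometric series I would evaluate via $\sum_{k\ge1}kx^{k-1}=1/(1-x)^2$, subtracting the tail $\sum_{k>N}kx^{k-1}=\tfrac{N^2(1+j)}{j^2}x^N$ at $x=1-j/N$, which yields $\sum_{k=1}^{N}k(1-j/N)^{k-1}=\tfrac{N^2}{j^2}\big(1-(1+j)(1-j/N)^N\big)$. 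Using $(1-j/N)^N\le e^{-j}$ together with the fact that $(1+j)e^{-j}$ is decreasing on $[1,\infty)$ (so $(1+j)e^{-j}\le 2/e<1$ for all $j\ge 1$), the probability is at least $\tfrac{1-(1+j)e^{-j}}{j^2}\ge\tfrac{1-2/e}{j^2}$, which is $\Omega(1)$ as soon as $j=\pi(i)=O(1)$.

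I expect the second statement to be the only real obstacle: the crude ``$k=1$'' argument gives only $\Omega(1/N^2)$, so one genuinely has to exploit that a top solution is missed by each single draw only with probability $1-j/N$, which is close to $1$ and makes the arithmetic–geometric sum of order $N^2$, cancelling the $1/N^2$ prefactor. The care is in controlling the tail of that series and in checking that the leftover constant $1-(1+j)e^{-j}$ stays bounded away from $0$ across all admissible $j$; the remaining ingredients (the closed form for the win probability and the nonnegativity used for the first statement) are routine.
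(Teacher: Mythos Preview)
Your proof is correct and follows essentially the same approach as the paper's. Both arguments use the $k=1$ term for the $1/N^2$ lower bound, and for the $\Omega(1)$ bound both reduce the conditional win probability $\big(\tfrac{N-j+1}{N}\big)^k-\big(\tfrac{N-j}{N}\big)^k$ to the first-order term $\tfrac{k}{N}(1-j/N)^{k-1}$ (the paper via a binomial expansion, you via $a^k-b^k\ge k(a-b)b^{k-1}$). The only cosmetic difference is that the paper conditions on $k\ge N/2$ and bounds a single term by $\tfrac{k}{N}\cdot e^{-O(j)}=\Omega(1)$, whereas you keep all $k$ and evaluate the truncated arithmetic--geometric series exactly; your route is a bit more explicit but the underlying mechanism is the same.
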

\begin{proof}
	For any  solution $\bmx \in P$, it can be selected if $k=1$ and the solution picked for competition is exactly $\bmx$.  The probabilities of the two events are both $1/N$, implying a lower bound $1/N^2$ on the probability of selecting $\bmx$ as a parent solution.
	For the furthermore clause, we consider the case that $k\ge N/2$. Suppose that $\bmx_i$  is a solution with $\pi(i)=O(1)$. Then, it can be selected if $\bmx_i$ is picked for competition, while any solution $\bmx_j$ with $\pi(j)<\pi(i)$ is not picked.  The probability of not picking any $\bmx_j$ with $\pi(j)<\pi(i)$ is $(1-(\pi(i)-1)/N)^k$, and conditional on this event, the probability of picking $\bmx_i$ is $1-(1-1/(N-\pi(i)+1))^k$.
	Thus,  the probability of selecting $\bmx_i$ given $k\ge N/2$ is
	\begin{align}
		&\bigg(1\!-\!\frac{\pi(i)-1}{N}\bigg)^k \cdot \bigg(1-\bigg(1-\frac{1}{N-\pi(i)+1}\bigg)^k\bigg)\\
		&=\bigg(1-\frac{\pi(i)-1}{N}\bigg)^k - \bigg(\frac{N-\pi(i)}{N}\bigg)^k \\
		&\ge \binom{k}{0}\bigg(1-\frac{\pi(i)}{N}\bigg)^k\bigg(\frac{1}{N}\bigg)^0 \!+\binom{k}{1}\bigg(1-\frac{\pi(i)}{N}\bigg)^{k-1}\bigg(\frac{1}{N}\bigg)  \!-\!\bigg(1\!-\!\frac{\pi(i)}{N}\bigg)^k\\
		&= \frac{k}{N}\bigg(1-\frac{1}{N/\pi(i)}\bigg)^{(N/\pi(i)-1)\cdot (k-1)/(N/\pi(i)-1)}\\
		&		\ge \frac{k}{N}\cdot \bigg(\frac{1}{e}\bigg)^{(k-1)/(N/\pi(i)-1)}=\Omega(1),
	\end{align}
	where the last equality is by $N/2\le k\le N$ and $\pi(i)=O(1)$. Note that the probability of selecting a $k$ such that $k\ge N/2$ is 1/2, and thus the lemma holds.
\end{proof}


\section{Running Time Analysis of NSGA-II}
In this section, we analyze the expected running time of the standard NSGA-II in Algorithm~\ref{alg:nsgaii}, i.e., NSGA-II using binary tournament selection, solving three bi-objective pseudo-Boolean problems LOTZ, OneMinMax and COCZ, which are widely used in MOEAs’ theoretical analyses~\cite{Giel03,LaumannsTEC04,Giel10,doerr2013lower,Qian13}.

The LOTZ  problem presented in Definition~\ref{def:LOTZ} aims to maximize the number of leading 1-bits and the number of trailing 0-bits of a binary bit string.  
The Pareto front of LOTZ is $\mathcal{F}=\{(0,n),(1,n-1),\ldots,(n,0)\}$, and the corresponding Pareto optimal solutions are $0^n,10^{n-1},\ldots,1^{n}$. 
\begin{definition}[LOTZ~\cite{LaumannsTEC04}]\label{def:LOTZ}
	The LOTZ problem of size $n$ is to find $n$ bits binary strings which maximize
	\begin{align}
		{\bm{f}}(\bmx)= \left(\sum\nolimits^n_{i=1} \prod\nolimits^{i}_{j=1}x_j, \sum\nolimits^{n}_{i=1} \prod\nolimits^{n}_{j=i}(1-x_j)\right),
	\end{align}
	where $x_j$ denotes the $j$-th bit of $\bmx \in \{0,1\}^n$.
\end{definition}
We prove in Theorem~\ref{thm:bintour-lotz} that the NSGA-II can find the Pareto front in $O(n^2)$ expected number of generations, i.e., $O(n^3)$ expected number of fitness evaluations, because the generated $N$ offspring solutions need to be evaluated in each iteration. 
Note that the running time of an EA is usually measured by the number of fitness evaluations, because evaluating the fitness of a solution is often the most time-consuming step in practice.
The main proof idea can be summarized as follows.
The NSGA-II first employs the mutation operator to find the two solutions with the largest number of leading 1-bits and the largest number of trailing 0-bits,  i.e., $1^n$ and $0^n$, respectively; then employs the recommendation operator to find the whole Pareto front.  
\begin{theorem}\label{thm:bintour-lotz}
	For the NSGA-II solving LOTZ, if using binary tournament selection and a population size $N$ such that $2n+2\le N=O(n)$, then the expected number of generations for finding the Pareto front is $O(n^2)$.
\end{theorem}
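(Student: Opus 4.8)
The plan is to split a run into two phases and to carry along a monotonicity invariant that is where the hypothesis $N\ge 2n+2$ really enters. Recall that for LOTZ one has $\LO(\bmx)+\TZ(\bmx)\le n$, with equality exactly for the Pareto-optimal strings $1^i0^{n-i}$, and that $1^i0^{n-i}$ is the \emph{unique} solution with objective vector $(i,n-i)$. The first thing I would establish is a preservation lemma: if some population $P_t$ contains a solution with objective vector $(i,n-i)$, then so does every later population. The point is that the first non-dominated front $F_1$ of $P_t\cup Q_t$ is an antichain, so sorting $F_1$ by $f_1$ reverses the order given by $f_2$ and $F_1$ contains at most $n+1$ distinct objective vectors; running Algorithm~\ref{alg:crowdist}, the two extremal solutions get crowding distance $\infty$, every objective vector occurring in $F_1$ has at least one copy of positive crowding distance, and crowding distance $0$ is assigned only to ``interior duplicates'' of a vector, so at most $2(n+1)\le N$ members of $F_1$ get positive crowding distance. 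Hence the truncation in lines~10--16 of Algorithm~\ref{alg:nsgaii} keeps a representative of every objective vector present in $F_1$; in particular it keeps $1^i0^{n-i}$ whenever it is present, and it always keeps $1^n$ and $0^n$ once they appear (those have crowding distance $\infty$). Consequently the set of covered Pareto indices, together with the largest $\LO$-value and the largest $\TZ$-value in the population, never decreases.

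In the first phase I would bound by $O(n^2)$ the expected number of generations until both $1^n$ and $0^n$ lie in the population. Let $\bm{z}$ be a solution of largest $\LO$-value, ties broken towards largest $\TZ$-value; it is non-dominated and is the $f_1$-maximal element of $F_1$, so it has crowding distance $\infty$ and wins every binary tournament it enters. Hence $\bm{z}$ occurs in the parent population $P'$ with probability $1-(1-1/N)^{2N}=\Omega(1)$ each generation. Conditioned on that, with probability $0.1$ its pair is not recombined, and then bit-wise mutation flips the $(\LO(\bm{z}){+}1)$-st bit while leaving bits $1,\dots,\LO(\bm{z})$ untouched with probability $\tfrac1n(1-\tfrac1n)^{\LO(\bm{z})}=\Omega(1/n)$, producing a solution with strictly larger $\LO$-value; that solution is non-dominated, so by the preservation lemma it enters the next population. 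Thus the largest $\LO$-value increases by at least $1$ with probability $\Omega(1/n)$ per generation, and since at most $n$ such increases are needed, $O(n^2)$ generations suffice in expectation; symmetrically for the largest $\TZ$-value, and as the two processes run concurrently the phase ends within $O(n^2)$ expected generations.

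In the second phase the population always contains $1^n$ and $0^n$. Let $S\subseteq\{0,\dots,n\}$ be the set of covered indices; by the preservation lemma $S$ only grows, and $\{0,n\}\subseteq S$. I would show that whenever $S\ne\{0,\dots,n\}$, a new index is added to $S$ with probability $\Omega(1/n)$ in the next generation; since at most $n+1$ additions are ever needed, this gives the claimed $O(n^2)$ bound on the expected number of generations. If index $1$ (resp.\ $n-1$) is missing, flipping exactly the first bit of $0^n$ (resp.\ the last bit of $1^n$) with no recombination creates $1^10^{n-1}$ (resp.\ $1^{n-1}0$) with probability $\Omega(1/n)$, and $0^n,1^n$ have crowding distance $\infty$ so occur in $P'$ with probability $\Omega(1)$. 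Otherwise, take the left endpoint $a$ of a \emph{widest} gap of $S$ (so $a\in S$, $a+1\notin S$); the Pareto-optimal solution $1^a0^{n-a}$ has, since its two neighbours in $S$ are far apart, crowding distance among the largest apart from $1^n,0^n$, and flipping exactly its $(a{+}1)$-st bit with no recombination creates $1^{a+1}0^{n-a-1}$ with probability $\Omega(1/n)$, adding $a+1$ to $S$. So it remains to argue that $1^a0^{n-a}$ is selected as a parent with probability $\Omega(1)$: since there are only at most $n+1$ distinct Pareto objective vectors while $|P|=N\ge 2n+2$, a constant fraction of $P$ consists of solutions that $1^a0^{n-a}$ is $\succ_{\mathrm{c}}$-superior to (dominated solutions and crowding-distance-$0$ duplicates), so it wins a random tournament with constant probability and hence appears in $P'$ with probability $1-(1-\Omega(1/N))^{N}=\Omega(1)$. (Recombination among $1^n$, $0^n$ and other covered points also creates missing Pareto points, but the mutation route above is the one that yields the clean $O(n^2)$ bound.)

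The mutation-probability estimates are routine; the delicate part, and the only place where $N\ge 2n+2$ is used essentially, is the crowding-distance bookkeeping. The preservation lemma needs a careful count of which members of $F_1$ receive crowding distance $0$ under Algorithm~\ref{alg:crowdist} --- exactly where the ``relative positions unchanged or totally reversed'' convention and the antichain structure of $F_1$ are used --- and the Phase~2 selection bound must hold against an adversarial population in every configuration: few covered indices, many tiny gaps, ties among the best solutions, rank-$1$ non-Pareto ``filler'' solutions, and accumulated duplicates of $1^n$ and $0^n$. I expect the main hurdle to be precisely this last point: showing that an endpoint of a widest remaining gap (or $1^n$/$0^n$) is chosen with constant probability in all of these configurations. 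Granting it, the rest is the two-phase template above.
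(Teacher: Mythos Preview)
Your two-phase decomposition, the preservation lemma (at most two positive-crowding-distance representatives per $\LO$-value in $F_1$, hence $N\ge 2n+2$ suffices to keep every covered Pareto vector), and the Phase~1 argument all match the paper and are correct.

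The gap is exactly the one you flag in Phase~2, and it is genuine rather than merely technical. Your claim that ``a constant fraction of $P$ consists of solutions that $1^a0^{n-a}$ is $\succ_{\mathrm c}$-superior to'' can fail at $N=2n+2$. First, the crowding distance of $1^a0^{n-a}$ is computed in $F_1$, not in $S$: non-Pareto-optimal rank-$1$ solutions with $\LO$-values in the gap $(a,b)$ can sit right next to $1^a0^{n-a}$ in the $f_1$-sorted list of $F_1$, so the ``widest gap in $S$'' need not translate into a large crowding distance at all. Second, even ignoring this, $F_1$ has at most $n+1$ distinct objective vectors and hence at most $2(n+1)=N$ members with positive crowding distance; when $|F_1|\ge N$, the survivors can be \emph{exactly} those $2(n+1)$ solutions, leaving no dominated solutions and no crowding-distance-$0$ duplicates in $P$ for $1^a0^{n-a}$ to beat. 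In that configuration your tournament-winning probability for $1^a0^{n-a}$ is not $\Omega(1)$, and the mutation route yields only an $O(n^3)$ bound on generations.

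The paper sidesteps this by using crossover in Phase~2 rather than mutation, which is precisely the route you set aside. One parent is $1^n$ or $0^n$, each selected with probability at least $1/(2N)$ because their crowding distance is $\infty$; the other parent is \emph{any} already-covered $1^k0^{n-k}$ with $0<k<n$, obtained with probability at least $i^2/N^2$ simply by drawing both tournament competitors from the $\ge i$ solutions whose objective vectors lie in $D$ (the winner then necessarily lies in $D$, regardless of crowding distances). One-point crossover of $1^k0^{n-k}$ with $0^n$ at any point $j<k$ yields $1^j0^{n-j}$, and with $1^n$ at any point $j>k$ yields $1^j0^{n-j}$; avoiding the already-covered $j$'s leaves $n-1-i$ good cut points. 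This gives a per-reproduction success probability $\Omega\!\big(i^2(n-1-i)/(nN^3)\big)$ and, over $N/2$ pairs, $\Omega\!\big(i^2(n-1-i)/(nN^2)\big)$ per generation; summing $\sum_{i=1}^{n-2} nN^2/\big(i^2(n-1-i)\big)=O(n^2)$ finishes Phase~2. The key point is that no control over \emph{which} intermediate Pareto solution is selected, nor over its crowding distance, is ever needed---only that some solution from $D$ wins, which is automatic when both competitors come from $D$.
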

\begin{proof}
	We divide the running process of NSGA-II into two phases. The first phase starts after initialization and finishes until $1^n$ and $0^n$ are both found; the second phase starts
	after the first phase and finishes when the Pareto front is found. We will show that the expected number of generations of the two phases are both $O(n^2)$, and thus prove the theorem. 
	In the following proof, we will use $\LO(\cdot)$ to denote the first objective value, i.e., the number of leading 1-bits of a solution, and $\TZ(\cdot)$ to denote the second objective value, i.e., the number of trailing 0-bits of a solution.
	
	\vspace{0.6em}
	\textbf{Analysis of the first phase.}	\vspace{0.1em}
	
	For the first phase, 
	we will prove that the expected number of generations for finding $1^n$ is $O(n^2)$, and then the same bound also holds for $0^n$ analogously.
	Let $\LOmax$ denote the maximum number of leading 1-bits of a solution in the current population $P$, i.e., 
	$\LOmax=\max\{\LO(\bmx)\mid \bmx\in P \}. $
	We first show that $\LOmax$ will not decrease during the optimization procedure of NSGA-II. Let $A=\{\bmx\in P\cup Q \mid \LO(\bmx)=\max_{\bmx\in P\cup Q} \LO(\bmx)\}$ denote the set of solutions in $P\cup Q$ with the maximum leading 1-bits, and $A^*=\{\bmx\in A\mid  \TZ(\bmx)=\max_{\bmx\in A}\TZ(\bmx)\}$ denote the set of solutions in $A$ with the maximum trailing 0-bits,
	where 
	$Q$ denotes the set of offspring solutions generated from $P$ in lines~5--9 of Algorithm~\ref{alg:nsgaii}.
	Then, the rank of any solution $\bmx\in A^*$ is 1 (i.e., $\bmx$ cannot be dominated by any other solution in $P\cup Q$), because $\bmx$ has the largest $\LO$ value and any solution with the same $\LO$ value cannot have larger $\TZ$ value. 
	Next, we consider two cases for $|F_1|$, where $F_1$ denotes the set of solutions in $P\cup Q$ with rank 1, and $|\cdot|$   denotes the size of a set. \\
	(1) $|F_1|\le N$. Then, all the solutions in $A^*$ will be included in the next population. \\
	(2) $|F_1|>N$. Then, we need to compute the crowding distances of the solutions in $F_1$ using Algorithm~\ref{alg:crowdist}, and preserve $N$ solutions with the largest crowding distance. Note that $A$ consists of the solutions with the largest $\LO$ value, and the solutions in $A\setminus A^*$ must have rank larger than 1,  thus $A^*$  actually consists of all the solutions in $F_1$ with the largest $\LO$ value. Therefore, one of the solutions in $A^*$ will be put in the last slot when sorting the solutions in $F_1$ according to the $\LO$ value, which implies such solution will have an crowding distance of $\infty$. For the bi-objective problem LOTZ, we need to sort the solutions in $F_1$ twice, i.e.,  according to the $\LO$ value and the $\TZ$ value, respectively, 
	implying that there are at most four solutions whose crowding distance can be assigned to $\infty$. Consequently, at least one solution in $A^*$ belongs to the four best solutions in $F_1$ (w.r.t. $\succ_{\mathrm{c}}$), and thus can be included in the next population. \\
	Combining the two cases, we have shown that there exists one solution in the next population whose number of leading 1-bits is $\max_{\bmx\in P\cup Q}\LO(\bmx)$, which is obviously not smaller than $\LOmax$.	 
	
	Next, we show that $\LOmax$ ($<n$) can increase by at least 1 with probability at least $\Omega(1/n)$ in each generation.
	Similar to the analysis in the above paragraph, there exists one solution $\bmx^*\in \{\bmx\in P\mid \LO(\bmx)=\LOmax\} $ such that   $\rank{\bmx^*}=1$ and $\dist{\bmx^*}=\infty$. 
	Recall that when using the tournament selection to select a parent solution, the competition between the two randomly selected solutions is based on their ranks and crowding distances (in case of equal ranks).
	Thus, once $\bmx^*$ is selected for competition (whose probability is $1/N$), it will always win, if the other solution selected for competition has larger rank or finite crowding distance; or win with probability $1/2$, if the other solution has the same rank and crowding distance as $\bmx^*$, resulting in a tie which is broken uniformly at random.\\
	Suppose $\bmx^*$ becomes a parent solution, then it will generate two offspring solutions together with another parent solution by crossover and mutation.
	In the reproduction procedure, $\bmx^*$ can keep unchanged after crossover with probability at least 0.1 by line~6 of Algorithm~\ref{alg:nsgaii}, and flip only its $(\LO(\bmx^*)+1)$-th bit (which is a 0-bit) with probability $(1/n)\cdot (1-1/n)^{n-1}\ge 1/(en)$ by bit-wise mutation. Thus, the probability of generating an offspring solution $\bmy$ with $\LO(\bmy)\ge \LO(\bmx^*)+1= \LOmax+1$ is at least $0.1\cdot 1/(en)$. \\
	In each generation, $N$ (i.e., $N/2$ pairs of) parent solutions will be selected and produce $N/2$ pairs of offspring solutions, thus the probability of generating a solution with more than $\LOmax$ leading 1-bits is at least 
	\begin{equation}\label{eq:LOmax-onestep}
		\begin{aligned}
			&1-\Big(1-\frac{1}{2N}\cdot 0.1\cdot \frac{1}{en}\Big)^{N/2}
			\ge 1-e^{-(1/(20enN))\cdot (N/2)}\\
			& = 1-\frac{1}{e^{1/(40en)}}\ge  1-\frac{1}{1+1/(40en)}= \frac{1}{40en+1}=\Omega\Big(\frac{1}{n}\Big),
		\end{aligned}
	\end{equation}
	where the inequalities hold by $1+a\le e^a$ for any $a\in \mathbb{R}$. 
	By the analysis in the previous paragraph, there must exist a solution $\bmy^*\in P\cup Q$ with the largest number of leading 1-bits such that $\rank{\bmy^*}=1$ and $\dist{\bmy^*}=\infty$, and thus will be maintained in the next population.
	Hence, once an offspring solution with the number of leading 1-bits larger than $\LOmax$ is generated, $\LOmax$ will increase by at least 1 in the next population.
	
	Note that the initial value of $\LOmax$ is at least 0, thus the expected number of generations for increasing $\LOmax$ to $n$, i.e., finding $1^n$, is at most $O(n^2)$. Analogously, we can derive that the expected number of generations for finding $0^n$ is also $O(n^2)$. 
	
	\vspace{0.6em}
	\textbf{Analysis of the second phase.}	\vspace{0.1em}
	
	Now we consider the second phase, i.e., finding the whole Pareto front. We first show that once an objective vector $\bmf^*$ in the Pareto front has been found, it will always be maintained in the population. To this end, we first show that for any $i\in \{0,1,\ldots,n\}$, there exist at most two solutions in $P\cup Q$ with $i$ leading 1-bits, such that their ranks are equal to 1 and crowding distances are larger than 0.  
	Given any $i\in \{0,1,\ldots,n\}$, we simply assume that there exists at least one solution in $P\cup Q$ with $i$ leading 1-bits, because otherwise,  the claim already holds. Let 
	 \begin{equation}\label{eq:bintour-lotz-Bi}
	 	\begin{aligned}
	 		B^i= \{\bmx\in P\cup Q\mid \LO(\bmx)=i \wedge 
	 		\TZ(\bmx)=\max_{\bmx'\in P\cup Q, \LO(\bmx')=i}\TZ(\bmx')\}
	 	\end{aligned}
	 \end{equation} 
 	denote the set of solutions which have $i$ leading 1-bits and meanwhile have the maximum number of trailing 0-bits. Then, for any $\bmx\in (P\cup Q)\setminus B^i$ satisfying $\LO(\bmx)=i$, it must hold $\rank{\bmx}>1$, because $\bmx$ can be dominated by any solution in $B^i$.   Thus, we only need to consider the solutions in $B^i$.
	If $|B^i|\le 2$, where $|\cdot|$ denotes the cardinality of a set, then the claim trivially holds.
	If $|B^i|\ge 3$,  then at most two solutions in $B^i$ can have crowding distances larger than 0, because all the solutions in $B^i$ have the same objective vector, and one solution can be assigned a crowding distance larger than 0 only if it is put in the first or the last position among the solutions in $B^i$ when the solutions are sorted according to some objective function.	
	Note that here we use the assumption in Section~\ref{sec:nsgaii}, i.e., when the solutions  with the same objective vector are sorted according to some objective function $f_j$, theirs positions are unchanged or totally reversed.\\
	Now we show that there exists at least one solution $\bmx$ corresponding to $\bmf^*$ such that $\rank{\bmx}=1$ and $\dist{\bmx}>0$, and then conclude the statement, i.e., $\bmf^*$ will always be maintained in the population. Let $C$ denote the set of solutions in $P\cup Q$ whose objective vectors are identical to $\bmf^*$. Then, any solution in $C$ has rank 1, because it cannot be dominated by any other solution. When the solutions in $C$ are sorted according to some objective function, one solution (denoted as $\hat{\bmx}$) will be put in the first or the last position, thus having a crowding distance larger than 0 by line~6 of Algorithm~\ref{alg:crowdist}. Thus, $\hat{\bmx}$ will not be inferior to $(2n+2)$ solutions in $P\cup Q$, because otherwise there must exist other $(2n+2)$ solutions with rank 1 and crowding distance larger than 0, which leads to a contradiction. Note that the population size $N\ge 2n+2$, we can derive that $\hat{\bmx}$ will be kept in the next population, which implies $\bmf^*$ will also be maintained.
	
	Next, we consider the expansion of the Pareto fount. 
	We first analyze the probability of selecting $1^n$ or $0^n$ as a parent solution.
	Note that at least one $1^n$ in $P$ has rank 1 and crowding distance $\infty$, thus once it is selected for competition, whose probability is $1/N$, it will win the competition with probability at least $1/2$, where $1/2$ is the probability of breaking a tie. Hence, the probability of selecting $1^n$ as a parent solution is least $1/(2N)$, and the same bound also holds for $0^n$ by a similar argument.
	
	We now analyze the probability of generating a new Pareto optimal solution. 
	Let 
	\begin{equation}\label{eq:bintoru-lotz-D}
		D=\{(j,n-j)\mid\bmx \in P \wedge \bmx=1^j0^{n-j},  j \in\{1,2,\ldots,n-1\} \}
	\end{equation}
	 denote the set of objective vectors of the Pareto optimal solutions in $P$ (except $1^n$ and $0^n$). 
	Suppose currently the size of $D$ is equal to $i$.  If $i=0$, then selecting $1^n$ and $0^n$ as a pair of parent solution, and exchanging the first $j$ bits ($1\le j\le n-1$) can generate a new Pareto optimal solution $1^j0^{n-j}$. The probability of such event is at least 
	\begin{equation}\label{eq:second-path-i0}
		\begin{aligned}
			\frac{2}{(2N)^2}\cdot 0.9\cdot \frac{n-1}{n} \cdot \big(1-\frac{1}{n}\big)^n 
			\ge \frac{0.9}{2eN^2}\cdot \big(1-\frac{1}{n}\big)^2\ge \frac{1}{4eN^2},
		\end{aligned}
	\end{equation}
	where the term $2/(2N)^2$ denotes the probability of selecting $1^n$ and $0^n$, or $0^n$ and $1^n$ as a pair of solutions, the term 0.9 denotes the probability of performing the crossover operator, the term $(n-1)/n$ denotes the probability of selecting one of the $(n-1)$ crossover points, the term $(1-1/n)^n$ denotes the probability of not flipping any bits by mutation, the first inequality holds by $(1-1/n)^{n-1}\ge 1/e$, and the last inequality holds for $n\ge 4$.  \\
	Then, we consider the case $i>0$. 
	In one binary tournament selection procedure, the probability of selecting two solutions with objective vectors in $D$ is at least $i^2/N^2$, and we denote the winning solution, i.e., the parent solution, as $\tilde{\bmx}$.
	Suppose the number of leading 1-bits of $\tilde{\bmx}$ is $k$, 
	and let $D_1=\{j\mid (j,n-j)\in D\wedge j<k\}$,
	$D_2=\{j\mid (j,n-j)\in D\wedge j>k\}$.
	If the other parent solution is $0^n$, then exchanging the first $k_1$ ($1\le k_1\le k-1,k_1\notin D_1$) bits of $\tilde{\bmx}$ and $0^n$ can generate a new Pareto optimal solution $1^{k_1}0^{n-k_1}$, whose probability is $(k-1-|D_1|)/n$; if the other parent solution is $1^n$, then exchanging the first $k_2$ ($k+1\le k_2\le n-1,k_2\notin D_2$) bits of $\tilde{\bmx}$ and $1^n$ can generate a new Pareto optimal solution $1^{k_2}0^{n-k_2}$, whose probability is $(n-1-k-|D_2|)/n$. Note that the probability of selecting $1^n$ (or $0^n$) as a parent solution is lower bounded by $1/(2N)$, thus we can derive that the probability of generating  a new Pareto optimal solution in $P$ is at least
	\begin{equation}
		\begin{aligned}\label{eq:second-path-i>0}
			&\frac{i^2}{N^2}\cdot  \frac{1}{2N}\cdot 0.9\cdot \Big(\frac{k-1-|D_1|}{n}+\frac{n-1-k-|D_2|}{n}\Big)\cdot \big(1-\frac{1}{n}\big)^n \\
			&\ge \frac{i^2}{N^2}\cdot\bigg( \frac{1}{2N}\cdot \frac{n-2-|D_1|-|D_2|}{n}\cdot \frac{1}{2e}\bigg) 
			=\frac{i^2(n-1-i)}{4enN^3},
		\end{aligned}
	\end{equation}
	where the term $0.9$ denotes the probability of using the crossover operator, 
	the term $(1-1/n)^n$ denotes the probability of not flipping any bits by mutation, and the inequality holds by $(1-1/n)^{n-1}\ge 1/e$ and $0.9\cdot (1-1/n)>1/2$ for $n\ge 3$.\\
	Note that NSGA-II performs $N/2$  reproduction procedures, i.e., selection, crossover and mutation, in each generation. Thus, by Eqs.~\eqref{eq:second-path-i0} and~\eqref{eq:second-path-i>0}, the probability of generating a new objective vector in Pareto front is at least 
	\begin{equation}
		\begin{aligned}
			1-\Big(1-\frac{1}{4eN^2}\Big)^{N/2}\ge 1-e^{-1/(8eN)}
			\ge 1-\frac{1}{1+1/(8eN)}=\frac{1}{8eN+1}
		\end{aligned}
	\end{equation}
	for $i=0$, and
	\begin{equation}
		\begin{aligned}
			&1-\Big(1-\frac{i^2(n-1-i)}{4enN^3}\Big)^{N/2}
			\ge 1-e^{-i^2(n-1-i)/(8enN^2)}\\
			&
			\ge 1-\frac{1}{1+i^2(n-1-i)/(8enN^2)}
			\ge \frac{i^2(n-1-i)}{8enN^2+i^2(n-1-i)}
		\end{aligned}
	\end{equation}
	for $i>0$, where the inequalities hold by $1+a\le e^a$ for any $a\in \mathbb{R}$.
	
	Now, we can derive that the expected number of generations for finding the Pareto front is at most 
	\begin{equation}\label{eq:second-path-sum}
		\begin{aligned}
			&8eN\!+\!1\!+\sum_{i=1}^{n-2}\Big(\frac{8enN^2}{i^2(n-1-i)}+1\Big)
			=O(n)+8enN^2\sum_{i=1}^{n-2}\frac{1}{i^2(n-1-i)},
		\end{aligned}
	\end{equation}
	where the equality is by $N=O(n)$. Note that 
	\begin{align}
		\frac{1}{i^2(n-1-i)}
		=\frac{1}{n-1}\cdot \frac{1}{i^2}+\frac{1}{(n-1)^2}\cdot \Big(\frac{1}{i}+\frac{1}{n-1-i}\Big),
	\end{align}
	thus Eq.~\eqref{eq:second-path-sum} continues with 
	\begin{align}
		&=O(n)+8enN^2\sum_{i=1}^{n-2}\bigg(\frac{1}{n-1}\cdot \frac{1}{i^2}+\frac{1}{(n-1)^2}\cdot \Big(\frac{1}{i}+\frac{1}{n-1-i}\Big)\bigg)\\
		&=O(n)+\frac{8enN^2}{n-1}\bigg(1+\sum_{i=2}^{n-2}\frac{1}{i^2}+\frac{1}{n-1}\sum_{i=1}^{n-2}\Big(\frac{1}{i}+\frac{1}{n-1-i}\Big)\bigg)\\
		&\le O(n)+\frac{8enN^2}{n-1}\bigg(1+\sum_{i=2}^{n-2}\Big(\frac{1}{i-1}-\frac{1}{i}\Big)+\frac{2(1+\ln (n-2))}{n-1}\bigg)\\
		&\le O(n)+\frac{8enN^2}{n-1}\bigg(1+1-\frac{1}{n-2}+o(1)\bigg)
		=O(n^2),
	\end{align}
	where the first inequality is by $\sum_{i=1}^{j}1/i\le 1+\ln j$ for any $j\ge 1$. Hence, we have shown an upper bound $O(n^2)$ for the expected number of generations of the second phase.
\end{proof}

The OneMinMax problem presented in Definition~\ref{def:OMM} aims to maximize the number of 0-bits and the number of 1-bits of a binary bit string.  
The Pareto front of OneMinMax is $\mathcal{F}=\{(0,n),(1,n-1),\ldots,(n,0)\}$, and 
any solution $\bmx\in\{0,1\}^n$ is Pareto optimal, corresponding to the objective vector $(n-|\bmx|_1,|\bmx|_1)$ in the Pareto front, where $|\cdot|_1$  denotes the number of 1-bits of a solution.
\begin{definition}[OneMinMax~\cite{Giel10}]\label{def:OMM}
	The OneMinMax problem of size $n$ is to find $n$ bits binary strings which maximize
	\begin{align}
		{\bm f}(\bmx)=\left(n-\sum\nolimits^n_{i=1}x_i, \sum\nolimits^{n}_{i=1} x_i\right),
	\end{align}
	where $x_i$ denotes the $i$-th bit of $\bmx \in \{0,1\}^n$.
\end{definition}
We prove in Theorem~\ref{thm:bintour-omm} that the NSGA-II can find the Pareto front in $O(n\log n)$ expected number of generations, i.e., $O(n^2\log n)$ expected running time.
\begin{theorem}\label{thm:bintour-omm}
	For the NSGA-II solving OneMinMax, if using binary tournament selection and a population size $N$ such that $2n+2\le N= O(n)$, then the expected number of generations for finding the Pareto front is $O(n\log n)$.
\end{theorem}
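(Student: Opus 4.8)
The plan is to follow the two–ingredient template of Theorem~\ref{thm:bintour-lotz} (a maintained quantity never degrades, and it improves with decent probability), while exploiting the defining feature of OneMinMax: since $\bm f(\bmx)$ is determined by $|\bmx|_1$ and \emph{every} $\bmx\in\{0,1\}^n$ is Pareto optimal, in every call to Algorithm~\ref{alg:fastsort} the whole multiset $P\cup Q$ is the single non-dominated set $F_1$, so survival into the next population as well as the outcome of binary tournament selection is decided purely by the crowding distance. I write $v_{\min}$ and $v_{\max}$ for the smallest and largest $|\cdot|_1$-value occurring in the current population $P$, and call $i\in\{0,1,\dots,n\}$ \emph{covered} if some $\bmx\in P$ has $|\bmx|_1=i$. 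First I would prove a maintenance claim: a covered value stays covered, $v_{\min}$ never increases, $v_{\max}$ never decreases, and some solution attaining $v_{\min}$ (resp.\ $v_{\max}$) always has crowding distance $\infty$. The argument mirrors the second phase of Theorem~\ref{thm:bintour-lotz}: when $P\cup Q$ is sorted w.r.t.\ $f_1$ (and w.r.t.\ $f_2$, which reverses the order of the blocks), the solutions of a common value form a contiguous block; by the assumption of Section~\ref{sec:nsgaii}, only the two endpoints of a block can receive positive crowding distance, the block of the currently smallest/largest value always contains a position assigned $\infty$, and every block-endpoint solution receives contribution at least $1/n$ from each of the two sortings, so every positive crowding distance is at least $2/n$. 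Hence at most $2(n+1)\le N$ solutions of $P\cup Q$ have positive crowding distance, so all of them — in particular one representative of every covered value and the $\infty$-distance $v_{\min}$- and $v_{\max}$-solutions — survive.

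I would then split the run into two phases. In the first phase the goal is to produce $0^n$ and $1^n$, i.e.\ to bring $v_{\min}$ down to $0$ and $v_{\max}$ up to $n$. A solution with $|\cdot|_1=v_{\min}$ has crowding distance $\infty$, so, exactly as in Theorem~\ref{thm:bintour-lotz}, it is chosen as a parent in one binary tournament with probability $\Omega(1/N)$; with probability $0.1$ the produced pair undergoes no crossover, and then bit-wise mutation flips exactly one of its $v_{\min}$ one-bits with probability $\ge v_{\min}/(en)$, creating a solution of $|\cdot|_1<v_{\min}$. Aggregating over the $N/2$ tournaments of a generation exactly as in Eq.~\eqref{eq:LOmax-onestep}, $v_{\min}$ decreases by at least $1$ in a generation with probability $\Omega(v_{\min}/n)$ and never increases, so $v_{\min}=0$ is reached within $\sum_{v=1}^{n}O(n/v)=O(n\log n)$ expected generations; the symmetric argument gives $v_{\max}=n$ in $O(n\log n)$ expected generations, and since these two sub-processes progress regardless of the rest of $P$, the first phase costs $O(n\log n)$ expected generations.

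In the second phase $0^n$ and $1^n$ are already present, so the still-uncovered values form runs lying strictly between two covered values; let $x\le n-1$ be their total number. I would show $x$ drops by at least $1$ with probability $\Omega(1)$ in each generation, which (as $x$ is non-increasing) yields $x=0$ within $O(n)$ additional expected generations. Fix a longest such run $\{a+1,\dots,b-1\}$, with $a<b$ both covered. A block-endpoint solution of value $a$ (or of $b$) borders the gap of width $b-a\ge 2$, hence has crowding distance at least $2/n$; since every crowding distance is $0$ or at least $2/n$, such a solution is superior in $\succ_{\mathrm{c}}$ to all crowding-distance-$0$ solutions, and a counting argument on how the $N$ surviving solutions distribute over the at most $n+1$ blocks — using $N\ge 2n+2$ — shows that it beats $\Omega(n)$ solutions of $P$, hence is chosen as a parent in one binary tournament with probability $\Omega(n/N^2)=\Omega(1/n)$. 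If $a<n/2$, flipping one of the $n-a>n/2$ zero-bits of a value-$a$ solution (probability $\Omega(1)$, under the $0.1$-probability no-crossover event) covers $a+1$; otherwise $b>a\ge n/2$ and flipping one of the $b>n/2$ one-bits of a value-$b$ solution covers $b-1$; and if $a=0$ or $b=n$ one may instead mutate $0^n$ or $1^n$, whose crowding distance is $\infty$. Combining with the $\Theta(n)$ tournaments per generation, the longest run loses at least one value with probability $\Omega(1)$ per generation, which proves the drift claim. Adding the two phases gives the bound $O(n\log n)$ on the expected number of generations.

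I expect the crowding-distance bookkeeping of the second phase to be the main obstacle: one must show rigorously that a block-endpoint solution sitting deep inside $P$ — not one of the two $\infty$-distance extremes — is still selected by binary tournament with probability $\Omega(1/n)$. This hinges on a careful case analysis of the block-size profile of $P$ (distinguishing, e.g., the regime in which many distinct values are represented by singleton blocks, whose endpoints carry crowding distance $4/n$) and leans on the slack $N\ge 2n+2$. By contrast, the maintenance claim, the mutation estimates, and the harmonic-/linear-waiting-time sums are routine.
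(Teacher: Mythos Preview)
Your Phase~1 (a multiplicative-drift argument on $v_{\min}$ and $v_{\max}$, using that an extremal solution carries crowding distance $\infty$) and your maintenance claim coincide with the paper's proof. The divergence is Phase~2. The paper fills the remaining Pareto front via \emph{crossover}: it pairs a parent equal to $1^n$ or $0^n$ (selected with probability $\Omega(1/N)$ by virtue of its infinite crowding distance) with a second parent, and a suitable crossover point produces a not-yet-covered value; this yields per-generation progress probability $\Omega((n-1-i)/n)$ when $i$ interior values are already covered, hence $O(n\log n)$ for Phase~2. You instead rely on \emph{mutation} of a single frontier solution, targeting a stronger $O(n)$ bound for Phase~2.

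The gap is in your selection-probability claim. You assert that the block-endpoint $s$ bordering the longest gap beats $\Omega(n)$ solutions because it beats all crowding-distance-$0$ solutions. But the number of distance-$0$ solutions in $P$ is only guaranteed to be $N-2(n+1-x)\ge 2x$, which is $o(n)$ when $x$ is small: at $N=2n+2$ with a single uncovered value and all blocks of size $2$ or $3$, there are exactly two distance-$0$ solutions. To rescue the $\Omega(n)$ count one must use that $s$ has distance at least $3/n$ (not merely $\ge 2/n$) and therefore also beats the many non-frontier non-singleton endpoints of distance exactly $2/n$; but then one must separately control singleton blocks, whose distance $\ge 4/n$ may exceed that of $s$. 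This case analysis --- which you correctly flag as ``the main obstacle'' --- is genuinely needed; the one-line counting argument you give does not cover it.

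That said, your mutation route \emph{does} yield the theorem if you retreat to an $O(n\log n)$ Phase~2: since $s$ always beats at least the $2x$ distance-$0$ solutions, it wins a single binary tournament with probability $\Omega(x/N^2)=\Omega(x/n^2)$, hence is selected in some tournament of a generation with probability $\Omega(x/n)$; combined with your $\Omega(1)$ mutation step this gives per-generation progress probability $\Omega(x/n)$ and thus $\sum_{x=1}^{n-1}O(n/x)=O(n\log n)$. So your approach is a valid alternative to the paper's crossover argument for the stated bound --- it trades two-parent reasoning for crowding-distance bookkeeping --- and only the sharper $O(n)$ Phase-2 claim requires the missing fine analysis.
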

\begin{proof}
	The proof is similar to that of Theorem~\ref{thm:bintour-lotz}, i.e., we divide the optimization procedure into two phases, where the first phase starts after initialization and finishes until $1^n$ and $0^n$ are both found; the second phase starts	after the first phase and finishes when the Pareto front is found. However, for OneMinMax, we will show that the expected numbers of generations of the two phases are both $O(n\log n)$, instead of $O(n^2)$. 
	
	\textbf{For the first phase}, we need to consider the increment of a quantity $\Omax$, which is defined as $\Omax=\max\{|\bmx|_1\mid \bmx\in P\}$, where $|\cdot |_1$ denotes the number of 1-bits of a solution.  Then, similar to the analysis of $\LOmax$, $\Omax$ will not decrease, and we need to analyze the probability that $\Omax$ increases by at least 1 in each generation. Let $\bmx^*$ be a solution in $\{\bmx\in P\mid |\bmx|_1=\Omax\} $ such that   $\rank{\bmx^*}=1$ and $\dist{\bmx^*}=\infty$, then $\bmx^*$  will be selected for competition in the binary tournament selection with probability at least $1/N$, and can win with probability at least 1/2.  In the reproduction procedure, $\bmx^*$  can generate an offspring solution  $\bmy$ such that $|\bmy|_1\ge |\bmx^*|_1+1$ with  probability at least $0.1\cdot ((n-|\bmx^*|_1)/n)\cdot (1-1/n)^{n-1}\ge  (n-|\bmx^*|_1)/(10en)$,  where the term $0.1$ denotes the probability of not using recommendation operator, and the term $((n-|\bmx^*|_1)/n)\cdot (1-1/n)^{n-1}$ denotes the probability of flipping one 0-bit of $\bmx^*$. Then, similar to Eq.~\eqref{eq:LOmax-onestep}, the probability of generating a solution with more than $|\bmx^*|_1$ 1-bits in each generation is at least 
	\begin{equation}\label{eq:Omax-onestep}
		\begin{aligned}
			&1-\Big(1-\frac{1}{2N}\cdot \frac{n-|\bmx^*|_1}{10en}\Big)^{N/2}
			\ge 1-\frac{1}{e^{(n-|\bmx^*|_1)/(40en)}}\\
			&\ge  1-\frac{1}{1+(n-|\bmx^*|_1)/(40en)}= \frac{n-|\bmx^*|_1}{40en+(n-|\bmx^*|_1)}\\
			&=\Omega\Big(\frac{n-|\bmx^*|_1}{n}\Big).
		\end{aligned}
	\end{equation}
	Thus,  the expected number of generations for finding $1^n$ is at most 
	\begin{equation}\label{eq:Omax-time}
		\sum_{i=0}^{n-1}O\Big(\frac{n}{n-i}\Big)=O(n\log n),
	\end{equation}
	and the bound also holds for $0^n$ by a similar analysis procedure. 
	
	\textbf{Now we consider the second phase}.	Similar to Eq.~\eqref{eq:bintour-lotz-Bi},  we define a set $B^i=\{\bmx\in P\cup Q \mid |\bmx|_1=i\}$ ($0\le i\le i$), which denotes the set of solutions which have $i$ 1-bits. Note that we do not add any restriction to the other objective value, i.e., the number of 0-bits, because the number of 0-bits of a solution can be decided by the number of 1-bits. Then, following the analysis in the proof of Theorem~\ref{thm:bintour-lotz}, we can show that there exist at most two solutions in $P\cup Q$ with $i$ 1-bits such that their ranks are equal to 1 and crowding distances are larger than 0, and thus prove that an objective vector $\bmf^*$ in the Pareto front will always be maintained in the population once it has been found. \\
	Now we examine the probability of generating a new Pareto optimal solution in each generation. Similar to Eq.~\eqref{eq:bintoru-lotz-D}, let $D=\{(n-|\bmx|_1,|\bmx|_1)\mid\bmx \in P \wedge 1\le |\bmx|_1\le n-1 \}$ denote the set of objective vectors of the solutions in $P$ (except $1^n$ and $0^n$), and suppose currently the size of $D$ is equal to $i$. 
	Note that for OneMinMax, any solution $\bmx\in\{0,1\}^n$ is Pareto optimal, thus the following analysis is a little easier. First, we consider the case that the number of $1^n$ or the number of $0^n$ is larger than $N/4$.  Without loss of generality, we assume that the number of $0^n$ in $P$ is  at least $N/4$. Then,  the probability of selecting $0^n$ as a parent solution is at least $(1/4)^2=1/16$,  because we only need to select $0^n$ twice in binary tournament selection. By the analysis in Theorem~\ref{thm:bintour-lotz}, the probability of selecting $1^n$ as the other parent solution is at least $1/(2N)$. Then, exchanging the first $j$ ($1\le j\le n-1 \wedge (n-j,j)\notin D$) bits of $0^n$ and $1^n$ can generate a new Pareto optimal solution $1^j0^{n-j}$, whose probability  is at least $(n-1-i)/n$. Thus,  combining the above-mentioned probabilities, we can derive that the probability of generating a new Pareto
	optimal solution not in $P$ is at least $\Omega((n-1-i)/(nN))$.\\
	Then, we consider the case that the number of $1^n$ and the number of $0^n$ in $P$ are both smaller than or equal to $N/4$. In one binary tournament selection procedure, the probability of selecting two solutions with objective vectors in $D$ is at least $(N-N/4-N/4)^2/N^2=1/4$,  because it is sufficient to not select $0^n$ or $1^n$. Let $\tilde{\bmx}$ denote the winning solution, i.e., the parent solution, and suppose $|\tilde{\bmx}|_1=k$. 
	If the other parent solution is $0^n$, then for any $1\le k_1\le k-1,(n-k_1,k_1)\notin D$,  there must exist a crossover point $k_1'$ such that exchanging the first $k_1'$ bits of $\bmx$ and $0^n$ can generate a Pareto optimal solution with $k_1$ 1-bits. 
	If the other parent solution is $1^n$, then for any $k+1\le k_2\le n-1,(n-k_2,k_2)\notin D$,  there must exist a crossover point $k_2'$ such that exchanging the first $k_2'$ bits of $\bmx$ and $1^n$ can generate a Pareto optimal solution with $k_2$ 1-bits.
	Note that the probability of selecting $1^n$ (or $0^n$) as a parent solution is at least $1/(2N)$, thus similar to Eq.~\eqref{eq:second-path-i>0}, we can derive that the probability of generating  a new Pareto optimal solution not in $P$ is at least
	\begin{equation}
		\begin{aligned}
			\frac{1}{4}\cdot \frac{1}{2N}\cdot 0.9\cdot \frac{n-1-i}{n}\cdot \big(1-\frac{1}{n}\big)^n \ge \frac{n-1-i}{16enN}=\Omega\Big(\frac{n-1-i}{nN}\Big).
		\end{aligned}
	\end{equation}
	Thus, in each generation, the probability of generating a new objective vector in Pareto front is at least 
	\begin{equation}\label{eq:bintour-omm-phase2-prob}
		\begin{aligned}
			&1-\Big(1-\Omega\Big(\frac{n-1-i}{nN}\Big)\Big)^{N/2}\ge 1-e^{-\Omega((n-1-i)/(2n))}\\
			&\ge 1-\frac{1}{1+\Omega((n-1-i)/(2n))},
		\end{aligned}
	\end{equation}
	Then, we can derive that the expected number of generations for finding the whole Pareto front is at most 
	\begin{equation}\label{eq:bintour-omm-phase2-time}
		\begin{aligned}
			&\sum_{i=0}^{n-2}\Big(1-\frac{1}{1+\Omega((n-1-i)/(2n))}\Big)^{-1}\\
			&= 				\sum_{i=0}^{n-2}\Big(1+\frac{1}{\Omega((n-1-i)/(2n))}\Big)\\
			&=n-1+\sum_{i=0}^{n-2}O\Big(\frac{2n}{n-1-i}\Big)=
			O(n\log n),
		\end{aligned}
	\end{equation}
	where the last equality is by $\sum_{i=1}^{j}1/i\le 1+\ln j$ for any $j\ge 1$.  Thus, combining the analyses for the two phases, the Theorem holds.
\end{proof}

The COCZ problem as presented in Definition~\ref{def:COCZ} is similar to OneMinMax, but is a little complicated. Its first objective is to maximize the number of 1-bits of a solution, and the other objective is to maximize the number of 1-bits in the first half of the solution plus the number of 0-bits in the second half.  That is, the two objectives are consistent in maximizing the number of 1-bits in the first half of the solution, but conflict in the second half.
The Pareto front of COCZ is $\mathcal{F}=\{(n/2,n),(n/2+1,n-1),\ldots,(n,n/2)\}$, and any solution $\bmx$ satisfying $\sum_{i=1}^{n/2}x_i=n/2$
is Pareto optimal, corresponding to the objective vector $(n/2+\sum_{i=n/2+1}^{n}x_i,n/2+\sum_{i=n/2+1}^{n}(1-x_i))$ in the Pareto front. 
\begin{definition}[COCZ~\cite{LaumannsTEC04}]\label{def:COCZ}
	The COCZ problem of size $n$ is to find $n$ bits binary strings which maximize
	\begin{align}
		{\bm f}(\bmx)=\left(\sum\nolimits^n_{i=1} x_i, \sum\nolimits^{n/2}_{i=1} x_i +\sum\nolimits^{n}_{i=n/2+1} (1-x_i)\right),
	\end{align}
	where $n$ is even and $x_i$ denotes the $i$-th bit of $\bmx \in \{0,1\}^n$.
\end{definition}
We prove in Theorem~\ref{thm:bintour-cocz} that the NSGA-II can find the Pareto front in $O(n\log n)$ expected number of generations, i.e., $O(n^2\log n)$ expected running time.
\begin{theorem}\label{thm:bintour-cocz}
	For the NSGA-II solving COCZ, if using binary tournament selection and a population size $N$ such that $n+2\le N= O(n)$, then the expected number of generations for finding the Pareto front is $O(n\log n)$.
\end{theorem}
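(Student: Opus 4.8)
The plan is to reuse the two-phase template of Theorems~\ref{thm:bintour-lotz} and~\ref{thm:bintour-omm}, but with the sharper $O(n\log n)$-per-phase accounting of Theorem~\ref{thm:bintour-omm}. Let the first phase run from initialization until both $1^n$ and $1^{n/2}0^{n/2}$ lie in $P$; these attain the two extreme Pareto-front vectors $(n,n/2)$ and $(n/2,n)$. Let the second phase run until the whole front is covered. \textbf{First phase:} I would track $\Omax=\max\{f_1(\bmx)\mid\bmx\in P\}$ together with its $f_2$-analogue $\max\{f_2(\bmx)\mid\bmx\in P\}$. As in Theorem~\ref{thm:bintour-omm}, the solution of $P\cup Q$ that is lexicographically largest with respect to $(f_1,f_2)$ has rank $1$ and, being last in the $f_1$-sort, crowding distance $\infty$; since at most four solutions have infinite crowding distance and $N\ge n+2>4$, it survives, so $\Omax$ never decreases, and symmetrically for its $f_2$-analogue via the $f_2$-sort. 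To increase $\Omax<n$ in a generation, select this solution (picked in a tournament with prob.\ $\ge1/N$, wins with prob.\ $\ge1/2$), skip crossover (prob.\ $0.1$) and flip exactly one of its $n-\Omax$ zero-bits (prob.\ $\ge(n-\Omax)/(en)$); over the $N/2$ reproductions this yields, exactly as in Eq.~\eqref{eq:Omax-onestep}, a solution with larger $f_1$-value with prob.\ $\Omega((n-\Omax)/n)$, and such a solution survives. Summing $\sum_{v=0}^{n-1}O(n/(n-v))=O(n\log n)$ bounds the time to reach $\Omax=n$, i.e.\ to obtain $1^n$; the identical argument for the $f_2$-analogue (flipping one of the $n-\max f_2$ bits that are a $0$ in the first half or a $1$ in the second half) bounds the time to obtain $1^{n/2}0^{n/2}$, so the first phase takes $O(n\log n)$ expected generations.

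\textbf{Second phase (retention).} The crucial structural fact is that, once $1^n$ and $1^{n/2}0^{n/2}$ both lie in $P\cup Q$, every rank-$1$ solution $\bmx$ has $f_1(\bmx)\ge n/2$ and $f_2(\bmx)\ge n/2$ (otherwise $\bmx$ is dominated by $1^{n/2}0^{n/2}$, resp.\ by $1^n$). Since for each value of $f_1$ there is at most one rank-$1$ objective vector, there are at most $n/2+1$ distinct rank-$1$ objective vectors, hence by the equal-objective-vector block argument of Theorem~\ref{thm:bintour-lotz} at most $2(n/2+1)=n+2$ solutions of rank $1$ with positive crowding distance. As $N\ge n+2$, the same reasoning as in Theorem~\ref{thm:bintour-lotz} then shows that every already-covered front vector — in particular both endpoints — is preserved in the next population, so by induction the two endpoints stay in $P$ throughout the phase. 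I also record that any solution whose objective vector lies on the front is automatically Pareto optimal for COCZ (since $f_1+f_2=2\cdot(\#\text{1-bits in the first half})+n/2$ equals $3n/2$ only when the first half is $1^{n/2}$), and therefore crossing two Pareto-optimal solutions, both with first half $1^{n/2}$, yields a Pareto-optimal offspring.

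\textbf{Second phase (expansion).} Let $i$ be the number of interior front vectors (those with $1\le j\le n/2-1$ ones in the second half) currently covered, so $n/2-1-i$ are missing. A direct computation shows: crossing any $\bm{z}\in P$ with $1^n$ produces, as the crossover point ranges over $\{n/2,\dots,n\}$, a Pareto-optimal offspring whose number of second-half $1$-bits is monotone in the crossover point and hence sweeps the whole interval $[j_{\bm{z}},n/2]$, where $j_{\bm{z}}$ is the number of second-half $1$-bits of $\bm{z}$; symmetrically, crossing $\bm{z}$ with $1^{n/2}0^{n/2}$ sweeps $[0,j_{\bm{z}}]$. Thus every missing interior front vector is realized for a suitable choice of endpoint partner, and at least $(n/2-1-i)/2$ crossover points are ``useful''. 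Now I would argue per generation: if $1^n$ or $1^{n/2}0^{n/2}$ occupies at least a quarter of $P$, it is selected with prob.\ $\Omega(1)$ and crossed with the other endpoint (selected with prob.\ $\ge1/(2N)$); otherwise the $>N/2$ non-endpoint solutions yield, via two independent tournament picks, a non-endpoint parent with prob.\ $\ge1/4$, which is paired with the suitable endpoint with prob.\ $\ge1/(2N)$. In either case, accounting also for the crossover probability $0.9$ and the probability $(1-1/n)^n$ of no mutation, the probability of creating a missing front vector in one reproduction is $\Omega((n/2-1-i)/(nN))$, hence $\Omega((n/2-1-i)/n)$ over the $N/2$ reproductions of a generation. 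Summing $\sum_{i=0}^{n/2-2}O(n/(n/2-1-i))=O(n\log n)$ bounds the second phase, which together with the first phase proves the theorem.

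\textbf{Main obstacle.} The delicate part is the second phase: establishing that the presence of $1^n$ and $1^{n/2}0^{n/2}$ forces $f_1,f_2\ge n/2$ on all rank-$1$ solutions — this is exactly what reduces the required population size from $2n+2$ (as for LOTZ and OneMinMax) to $n+2$ — and making the expansion case distinction airtight, so that in \emph{every} generation some parent can be recombined with an endpoint to produce a missing front vector with probability $\Omega(1)$ times the naive $\Theta(1/(nN))$ estimate, despite the population possibly being dominated by non-Pareto-optimal solutions. The monotonicity of the offspring's second-half $1$-count in the crossover point is the device that makes ``at least $(n/2-1-i)/2$ useful crossover points'' fall out for recombinations of non-identical parents.
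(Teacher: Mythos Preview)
Your proposal is correct and follows essentially the same two-phase template as the paper, with the same $O(n\log n)$ accounting per phase. Two minor differences are worth noting. For retention, the paper partitions $P\cup Q$ directly by the number of second-half $1$-bits (which always has $n/2+1$ cells, independently of whether the endpoints are present) to obtain the $n+2$ bound, whereas you obtain the same count by first using the presence of $1^n$ and $1^{n/2}0^{n/2}$ to force $f_1,f_2\ge n/2$ on all rank-$1$ solutions; both arguments are valid, but the paper's applies uniformly across all generations without the inductive appeal to endpoint survival. For expansion, the paper handles the case $0<k<n/2$ tersely by pointing to the analogous arguments in Theorems~\ref{thm:bintour-lotz} and~\ref{thm:bintour-omm}, while your monotonicity observation (that the offspring's second-half $1$-count moves by $0$ or $1$ as the crossover point increases, hence sweeps the full interval $[j_{\bm z},n/2]$ or $[0,j_{\bm z}]$) makes the $(n/2-1-i)/2$ count of useful crossover points fully explicit and, in particular, works even when the non-endpoint parent is not Pareto optimal.
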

\begin{proof}
	The proof is similar to that of Theorems~\ref{thm:bintour-lotz} and~\ref{thm:bintour-omm}, i.e., we will divide the optimization procedure into two phases. However, the target of the first phase is a little different, i.e., we need to find $1^n$ and $1^{n/2}0^{n/2}$,  instead of $1^n$ and $0^n$. We will show that the expected numbers of generations of the two phases are both $O(n\log n)$, i.e., the same as that of OneMinMax.
	In the following discussion, we will use $\CZ(\bmx)$ to denote the second objective value of a solution $\bmx$, i.e., $\CZ(\bmx)=\sum^{n/2}_{i=1} x_i +\sum^{n}_{i=n/2+1} (1-x_i)$.
	
	\textbf{For the first phase}, the analysis for $1^n$ is almost the same as that of Theorem~\ref{thm:bintour-omm}, and we mainly examine the expected number of generations for finding $1^{n/2}0^{n/2}$. Let $\CZmax=\max\{\CZ(\bmx) \mid \bmx\in P\}$ denote the maximum second objective value of a solution in $P$, then similar to the analysis in the previous theorems, $\CZmax$ will not decrease, and we need to consider the increase of $\CZmax$. Let $\bmx^*$ be a solution in $\{\bmx\in P\mid \CZ(\bmx)=\CZmax\} $ such that   $\rank{\bmx^*}=1$ and $\dist{\bmx^*}=\infty$, then following the analysis in the Theorem~\ref{thm:bintour-omm}, $\bmx^*$ can be selected as a parent solution with probability at least $1/(2N)$. Suppose  the number of 1-bits in the first half of $\bmx^*$ is $k_1$, i.e., $\sum^{n/2}_{i=1} x_i^*=k_1$, and the number of 0-bits in the second half of $\bmx^*$ is $k_2$, i.e., $\sum^{n/2}_{i=1}(1- x_i^*)=k_2$.  Then, flipping one of the $(n/2-k_1)$ 0-bits in the first half of $\bmx^*$, or flipping one of the $(n/2-k_2)$ 1-bits in the second half of $\bmx^*$ can  generate an offspring solution $\bmy$ such that $\CZ(\bmy)\ge \CZ(\bmx^*)+1=\CZmax+1$. 
	Then, similar to Eq.~\eqref{eq:Omax-onestep}, the probability of generating a solution with the $\CZ$ value larger than $\CZmax$ in each generation is at least 
	\begin{equation}
		\begin{aligned}
			&1-\Big(1-\frac{1}{2N}\cdot \frac{n/2-k_1+n/2-k_2}{10en}\Big)^{N/2}
			\\
			&=1-\Big(1-\frac{1}{2N}\cdot \frac{n-\CZmax}{10en}\Big)^{N/2}
			=\Omega\Big(\frac{n-\CZmax}{n}\Big),
		\end{aligned}
	\end{equation}
	where the first equality is by $k_1+k_2=\CZ(\bmx^*)=\CZmax$, and the last equality is by the same derivation in Eq.~\eqref{eq:Omax-onestep}.
	Thus,  similar to Eq.~\eqref{eq:Omax-time}, the expected number of generations for increasing $\CZmax$ to $n$, i.e., finding $1^{n/2}0^{n/2}$, is at most $O(n\log n)$.
	
	\textbf{For the second phase},  we first need to show that  once an objective vector $\bmf^*$ in the Pareto front is found, it will always be maintained. Similar to Eq.~\eqref{eq:bintour-lotz-Bi}, let 
	\begin{align}
		B^i=\Big\{\bmx\in P\cup Q\mid &\sum\nolimits^n_{i=n/2+1} x_i = i \wedge\\ &\sum\nolimits^{n/2}_{i=1} x_i=\max_{\bmx'\in P\cup Q, \sum\nolimits^n_{i=n/2+1} x_i'=i}  \sum\nolimits^{n/2}_{i=1} x_i'\Big\}
	\end{align}
	denote the set of solutions which have $i$ 1-bits in the second half and meanwhile have the maximum number of 1-bits in the first half.  Then, following the analysis after Eq.~\eqref{eq:bintour-lotz-Bi}, we can prove that for any $i\in \{0,1,\ldots,n/2\}$, there exist at most two solutions in $P\cup Q$ with the numbers of the 1-bits in the second half equal to $i$, such that their ranks are equal to 1 and crowding distances are larger than 0. Then,  similar to the statement in Theorem~\ref{thm:bintour-lotz}, we can prove the claim, i.e., an objective vector $\bmf^*$ in the Pareto front will always be maintained once it has been found.
	
	Now we consider the expansion of the Pareto front. Let  $D=\{(|\bmx|_1,\CZ(\bmx))\mid\bmx \in P \wedge n/2<|\bmx|_1<n \wedge \sum\nolimits^{n/2}_{i=1} x_i = n/2\} $ denote the set of objective vectors of the solutions in $P$ (except $1^n$ and $1^{n/2}0^{n/2}$), and suppose currently the size of $D$ is equal to $i$. 
	Note that in one reproduction procedure, we need to select two parent solutions. Suppose we are given a parent solution $\tilde{\bmx}$ such that $\sum\nolimits^{n}_{i=n/2+1} \tilde{x}_i =k$, we will show that the probability of generating a new Pareto optimal solution not in $P$ is at least $\Omega((n/2-1-i)/(nN))$. 
	If $k=0$, then selecting $1^n$ as the other parent solution and exchanging the first $k'$ (where $n/2<k'<n,(k',n/2+n-k')\notin D$) bits of two parent solutions can generate a new Pareto optimal solution $1^{k'}0^{n-k'}$; if $k=n/2$, then selecting $1^{n/2}0^{n/2}$ as the other parent solution and exchanging the first $k'$ (where $n/2<k'<n,(n/2+n-k',n/2+k'-n/2)\notin D$) bits of two parent solutions can generate a new Pareto optimal solution $1^{n/2}0^{k'-n/2}1^{n-k'}$. Note that the probability of selecting $1^n$ (or $0^n$) as a parent solution is at least $1/(2N)$, thus in both cases, the probability of generating a new Pareto optimal solution not in $P$ is at least $\Omega((n/2-1-i)/(nN))$. If $0<k<n/2$, then similar to the analysis of the second phase in Theorems~~\ref{thm:bintour-lotz} and~\ref{thm:bintour-omm}, we can also derive that the probability of generating  a new Pareto optimal solution not in $P$ is at least $\Omega((n/2-1-i)/(nN))$.\\
	Thus, similar to Eqs.~\eqref{eq:bintour-omm-phase2-prob} and~\eqref{eq:bintour-omm-phase2-time}, the expected number of generations for finding the whole Pareto front is at most $O(n\log n)$.  		 
\end{proof}

\section{Analysis of NSGA-II Using Stochastic Tournament Selection}
In the previous section, we have proved that the expected running time of the standard NSGA-II is $O(n^3)$ for LOTZ, and $O(n^2\log n)$ for OneMinMax and COCZ, which is as same as that of the previously analyzed simple MOEAs, GSEMO and SEMO~\cite{LaumannsTEC04,Giel03,Qian13,Giel10}.
Next, we will show that by employing stochastic tournament selection in Definition~\ref{def:sto-tour} instead of binary tournament selection, the NSGA-II needs much less time to find the whole Pareto front. 

In particular, we prove that the expected number of generations of the NSGA-II using stochastic tournament selection is $O(n)$ (implying $O(n^2)$ expected running time) for solving all the three problems, in Theorems~\ref{thm:stotour-lotz}--\ref{thm:stotour-cocz}.
The working principle of the NSGA-II observed in the proofs of these theorems is similar to that observed in the previous section. That is, the NSGA-II first employs the mutation operator to find the solutions that maximize each objective function,  and then employs the crossover operator to quickly find the remaining objective vectors in the Pareto front.
However, the utilization of stochastic tournament selection can make the NSGA-II select prominent  solutions, i.e., solutions maximizing each objective function, with larger probability, 
making the crossover operator easier fill in the remaining Pareto front and thus reducing the total running time.
\begin{theorem}\label{thm:stotour-lotz}
	For the NSGA-II solving LOTZ, if using stochastic tournament selection and a population size $N$ such that $2n+2\le N= O(n)$, then the expected number of generations for finding the Pareto front is $O(n)$.
\end{theorem}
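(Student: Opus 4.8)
The plan is to reuse the two-phase decomposition from the proof of Theorem~\ref{thm:bintour-lotz}: phase~1 lasts until both $1^n$ and $0^n$ are in the population, and phase~2 lasts until the whole Pareto front is covered. The parts of that proof establishing \emph{survival} --- that $\LOmax$ and the maximum $\TZ$ value in $P$ never decrease, and that every Pareto-optimal objective vector, once produced, is kept forever because $N\ge 2n+2$ --- concern only lines~10--16 of Algorithm~\ref{alg:nsgaii} and are completely independent of the parent-selection rule, so they carry over verbatim. The only quantities that change are the probabilities of choosing the ``right'' parents, and this is exactly where Lemma~\ref{lem:stotour-prob} replaces a factor $\Theta(1/N)$ by $\Theta(1)$.

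The enabling observation is that Algorithm~\ref{alg:crowdist} assigns crowding distance $\infty$ only to the first and last solution of each of the two sortings, hence to at most four solutions of any non-dominated front; consequently a solution with rank~$1$ and crowding distance $\infty$ is $\succ_{\mathrm{c}}$-maximal, and therefore gets $\pi$-value $O(1)$ for every bijection $\pi$ satisfying Eq.~\eqref{eq:pi}. By Lemma~\ref{lem:stotour-prob}, stochastic tournament selection then picks such a solution as a parent with probability $\Omega(1)$. This applies to a solution $\bmx^*\in P$ attaining $\LOmax$ (it lies in $F_1$ and, being put last when $F_1$ is sorted by $\LO$, has crowding distance $\infty$) in phase~1, and to $1^n$ and to $0^n$ in phase~2.

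For phase~1 I would copy the per-reproduction estimate from the proof of Theorem~\ref{thm:bintour-lotz} but with the selection probability upgraded: if $\LOmax<n$, one reproduction produces an offspring with at least $\LOmax+1$ leading ones with probability at least $\Omega(1)\cdot 0.1\cdot (1/n)(1-1/n)^{n-1}=\Omega(1/n)$ (select $\bmx^*$, let crossover leave it unchanged, flip exactly its $(\LOmax+1)$-th bit). Since a generation consists of $N/2=\Omega(n)$ reproductions, the probability of increasing $\LOmax$ in a generation is $1-(1-\Omega(1/n))^{\Omega(n)}=\Omega(1)$, so $\LOmax$ reaches $n$ in $O(n)$ expected generations; the same holds for finding $0^n$, giving $O(n)$ expected generations for phase~1. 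For phase~2, $1^n$ and $0^n$ are both present and each is a parent with probability $\Omega(1)$, so the pair $\{1^n,0^n\}$ is a reproduction pair with probability $\Omega(1)$; if $i$ of the $n-1$ interior Pareto optima $1^j0^{n-j}$ are currently in $P$, then one-point crossover of $1^n$ and $0^n$ at any of the $n-1-i$ ``missing'' cut points produces a new Pareto optimum, so, together with the probability $(1-1/n)^n$ that mutation changes nothing, one reproduction yields a new Pareto optimum with probability $\Omega((n-1-i)/n)$. Over the $N/2=\Omega(n)$ reproductions of a generation this is $1-(1-\Omega((n-1-i)/n))^{\Omega(n)}=1-e^{-\Omega(n-1-i)}=\Omega(1)$ whenever $n-1-i\ge 1$, so each interior point is added within $O(1)$ expected generations and phase~2 takes $O(n)$ expected generations. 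Summing the two phases gives the claimed $O(n)$ bound.

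I expect the only real obstacle to be careful bookkeeping rather than a new idea: one has to nail down the ``$\pi$-value $O(1)$'' claims (that at most a constant number of front members get crowding distance $\infty$, that the distinguished copies of $1^n$, $0^n$ and $\bmx^*$ are among them, and that selecting any copy of $1^n$ is at least as likely as selecting that distinguished copy), and one has to check that the per-generation success probabilities are bounded below by an absolute constant \emph{uniformly} in the population contents. The delicate regime is $n-1-i=\Theta(1)$ in phase~2, where a $\Theta(1/n)$ per-reproduction probability must be amplified to $\Theta(1)$ using the $\Theta(n)$ reproductions of a generation via $1-(1-c/n)^{dn}\ge 1-e^{-cd}$.
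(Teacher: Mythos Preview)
Your proposal is correct and follows essentially the same two-phase argument as the paper: both proofs reuse the survival guarantees from Theorem~\ref{thm:bintour-lotz} verbatim, invoke Lemma~\ref{lem:stotour-prob} to upgrade the selection probability of the distinguished rank-$1$, infinite-crowding-distance solutions ($\bmx^*$ in phase~1, and $1^n$, $0^n$ in phase~2) from $\Theta(1/N)$ to $\Omega(1)$, and then amplify the per-reproduction success probabilities over $N/2=\Omega(n)$ reproductions to obtain an $\Omega(1)$ per-generation success probability. Your bookkeeping remarks (at most four solutions in $F_1$ receive crowding distance $\infty$, hence $\pi$-value $O(1)$; the regime $n-1-i=\Theta(1)$ is handled by $1-(1-c/n)^{dn}\ge 1-e^{-cd}$) are exactly the points the paper uses implicitly, so nothing is missing.
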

\begin{proof}
	The proof is similar to that of Theorem~\ref{thm:bintour-lotz}. The main difference is the probability of selecting a specific solution from $P$, which will affect the running time complexity of both the first phase and the second phase.  
	
	\textbf{For the first phase},  the probability of selecting the solution $\bmx^*$ is $q=\Omega(1)$ by Lemma~\ref{lem:stotour-prob},  instead of $\Omega(1/N)$.  Then, similar to Eq.~\eqref{eq:LOmax-onestep}, the probability of generating a solution with more than $\LOmax$  leading 1-bits  in each generation is at least 
	\begin{equation}\label{eq:LOmax-onestep-stotour}
		\begin{aligned}
			&1-\Big(1-\frac{q}{2}\cdot \frac{1}{en}\Big)^{N/2} 
			\ge 1-\frac{1}{e^{qN/(4en)}}\\
			&\ge  1-\frac{1}{1+qN/(4en)} \ge 1-\frac{1}{1+q/(2e)}= \frac{q}{2e+q}=\Omega(1),
		\end{aligned}
	\end{equation}
	where the last inequality is by $N\ge 2n+2$. Thus,  we can derive an upper bound $O(n)$ on
	the expected number of generations for finding $1^n$, and the bound also holds for $0^n$ similarly. 
	
	\textbf{For the second phase}, we consider the  case that the parent solutions are exactly $0^n$ and $1^n$, instead of the case in the proof of Theorem~\ref{thm:bintour-lotz}, i.e., one parent solution is selected from the set of Pareto optimal solutions in $P$, and the other parent solution is selected from $0^n$ or $1^n$. 
	By Lemma~\ref{lem:stotour-prob},  the probability of selecting $0^n$ (or $1^n$) as a parent solution is $\Omega(1)$,  thus the probability that $0^n$ and $1^n$ are selected as a pair of parent solutions is also  $\Omega(1)$.  Suppose currently $i$ objective vectors in the Pareto front (except $(n,0)$ and $(0,n)$) have been found.  Note that exchanging the first $j$-th ($1\le j\le n-1$) bits of $1^n$ and $0^n$ can generate a Pareto optimal solution $1^j0^{n-j}$,  thus the probability of generating a new objective vector in the Pareto front is at least
	\begin{equation}\label{eq:second-path-stotour}
		\begin{aligned}
			\Omega(1)\cdot 0.9\cdot \frac{n-1-i}{n}\cdot \Big(1-\frac{1}{n}\Big)^n = \Omega\Big(\frac{n-1-i}{n}\Big),
		\end{aligned}
	\end{equation}
	where the term $0.9$ denotes the probability of using the crossover operator, 
	the term $(n-1-i)/n$ denotes the probabilities of selecting one crossover point, and the term $(1-1/n)^n$ denotes the probability of not flipping any bits by mutation.
	In each generation, NSGA-II produces $N/2$ pairs of offspring solutions, thus, the probability of generating a new objective vector in Pareto front is at least 
	\begin{equation}
		\begin{aligned}
			&1-\Big(1-\Omega\Big(\frac{n-1-i}{n}\Big)\Big)^{N/2}\ge 1-e^{-\Omega(N(n-1-i)/(2n))}\\
			&\ge 1-e^{-\Omega(n-1-i)}=1-\frac{1}{e^{\Omega(n-1-i)}}\ge 1-\frac{1}{1+\Omega(n-1-i)},
		\end{aligned}
	\end{equation}
	where the first and third inequalities hold by $1+a\le e^a$ for any $a\in \mathbb{R}$,  and the second inequality holds by $N\ge 2n+2$. Then, we can derive that the expected number of generations for finding the whole Pareto front is at most 
	\begin{equation}
		\begin{aligned}
			&\sum_{i=0}^{n-2}\frac{1}{1-1/(1+\Omega(n-1-i))}= 				\sum_{i=0}^{n-2}\Big(1+\frac{1}{\Omega(n-1-i)}\Big)\\
			&=n-1+O(1+\ln (n-1))=O(n),
		\end{aligned}
	\end{equation}
	where the first inequality is by $\sum_{i=1}^{j}1/i\le 1+\ln j$ for any $j\ge 1$.  Thus,  combining the analyses for the two phases leads to the theorem.
\end{proof}
 The proofs of Theorems~\ref{thm:stotour-omm} and~\ref{thm:stotour-cocz} are omitted, because they are almost as same as that of Theorem~\ref{thm:stotour-lotz}. We only need to incorporate the properties of OneMinMax and COCZ revealed in the proofs of  Theorems~\ref{thm:bintour-omm} and~\ref{thm:bintour-cocz}.
That is, an objective vector in the Pareto front will always be maintained once it has been found, if the population size $N$ is at least $2n+2$ for OneMinMax and at least $n+2$ for COCZ.
\begin{theorem}\label{thm:stotour-omm}
	For the NSGA-II solving OneMinMax, if using stochastic tournament selection and a population size $N$ such that $2n+2\le N= O(n)$, then the expected number of generations for finding the Pareto front is $O(n)$.
\end{theorem}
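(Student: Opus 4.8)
The plan is to follow the proof of Theorem~\ref{thm:stotour-lotz} essentially line for line, substituting the OneMinMax-specific estimates already worked out in the proof of Theorem~\ref{thm:bintour-omm}, and invoking Lemma~\ref{lem:stotour-prob} to raise the probability of selecting a prominent solution from $\Omega(1/N)$ (binary tournament) to $\Omega(1)$ (stochastic tournament). As usual the run is split into two phases: the first ends once both $1^n$ and $0^n$ are in the population, the second once the whole Pareto front has been found. Two facts carry over unchanged, since they concern only survival selection, which the new parent rule does not touch: (i) $\Omax:=\max\{|\bmx|_1\mid\bmx\in P\}$ never decreases, and (ii) for every $i\in\{0,\ldots,n\}$ at most two solutions of $P\cup Q$ with $|\bmx|_1=i$ have rank $1$ and crowding distance larger than $0$, so that, because $N\ge 2n+2$, a Pareto-front objective vector is never lost once found (this is exactly the ``stay-found'' argument in the proof of Theorem~\ref{thm:bintour-omm}).

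For the first phase, pick $\bmx^*\in P$ with $|\bmx^*|_1=\Omax$, $\rank{\bmx^*}=1$, $\dist{\bmx^*}=\infty$; such a solution survives because, for OneMinMax, every solution of $P\cup Q$ has rank $1$, hence $F_1=P\cup Q$ is the critical front and a copy of $\bmx^*$ sits at the end of the $f_2$-ordering and receives crowding distance $\infty$. Since at most four solutions have rank $1$ and infinite crowding distance, the $\pi$-value of $\bmx^*$ is $O(1)$, so by Lemma~\ref{lem:stotour-prob} it is chosen as a parent with probability $q=\Omega(1)$. Given that, with probability at least $0.1\cdot\frac{n-\Omax}{n}\cdot(1-1/n)^{n-1}\ge\frac{n-\Omax}{10en}$ the offspring it produces (no crossover, then a single $0$-bit flipped) has more than $\Omax$ ones. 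With $N/2$ independent pairs per generation, the probability that $\Omax$ increases in a generation is at least
\begin{equation*}
1-\Big(1-q\cdot\tfrac{n-\Omax}{10en}\Big)^{N/2}\ \ge\ 1-e^{-\Omega(n-\Omax)}\ \ge\ 1-\frac{1}{1+\Omega(n-\Omax)}\ =\ \Omega(1),
\end{equation*}
where we used $N\ge 2n+2$ and $n-\Omax\ge 1$. Hence $\Omax$ climbs to $n$ in $O(n)$ expected generations, and the same bound holds for reaching $0^n$.

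For the second phase, $1^n$ and $0^n$ each have rank $1$, infinite crowding distance and $\pi$-value $O(1)$, so by Lemma~\ref{lem:stotour-prob} each is selected with probability $\Omega(1)$ and the pair $(1^n,0^n)$ appears as a parent pair with probability $\Omega(1)$. If $i$ front vectors other than $(n,0),(0,n)$ are present, a one-point crossover of $1^n$ and $0^n$ at one of the $n-1-i$ missing cut points, followed by an empty mutation, yields a new Pareto-optimal string $1^j0^{n-j}$; hence each pair contributes a new front vector with probability at least $\Omega(1)\cdot0.9\cdot\frac{n-1-i}{n}\cdot(1-1/n)^n=\Omega\!\big(\frac{n-1-i}{n}\big)$. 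Over $N/2$ pairs this is $\ge 1-(1-\Omega(\frac{n-1-i}{n}))^{N/2}\ge 1-1/(1+\Omega(n-1-i))$, so completing the front takes at most $\sum_{i=0}^{n-2}(1+1/\Omega(n-1-i))=n-1+O(\log n)=O(n)$ expected generations, which together with the first phase gives the theorem.

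The arithmetic is a transcription of the LOTZ analysis with the factors $(n-\Omax)/n$ and $(n-1-i)/n$ in place of their LOTZ counterparts, so I expect no real obstacle there. The only points needing genuine care are that the OneMinMax maintenance invariant of Theorem~\ref{thm:bintour-omm} applies verbatim — which it does, being about survival selection only — and that the solutions to which we apply the $\Omega(1)$ clause of Lemma~\ref{lem:stotour-prob} ($1^n$, $0^n$, and the running fittest $\bmx^*$) really do have $\pi$-value $O(1)$; this rests on their being extreme points of the current objective set and therefore carrying rank $1$ and infinite crowding distance, and on correctly tracking which surviving copy of such a point holds the infinite distance from one generation to the next.
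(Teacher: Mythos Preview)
Your proposal is correct and follows essentially the same approach as the paper: the paper in fact omits the proof of this theorem, stating only that it is the same as the LOTZ case (Theorem~\ref{thm:stotour-lotz}) combined with the OneMinMax-specific survival invariants from Theorem~\ref{thm:bintour-omm}, which is exactly what you spell out. Your identification of the one genuinely delicate point---that the extremal solutions $1^n$, $0^n$, and the running $\bmx^*$ carry rank $1$ and infinite crowding distance, hence $\pi$-value at most $4=O(1)$, so that Lemma~\ref{lem:stotour-prob} applies---is precisely the ingredient the paper relies on implicitly.
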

\begin{theorem}\label{thm:stotour-cocz}
	For the NSGA-II solving COCZ, if using stochastic tournament selection and a population size $N$ such that $n+2\le N= O(n)$, then the expected number of generations for finding the Pareto front is $O(n)$.
\end{theorem}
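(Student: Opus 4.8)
The plan is to follow the same two-phase structure used in the proof of Theorem~\ref{thm:stotour-lotz}, transplanting into it the structural facts about COCZ already established in the proof of Theorem~\ref{thm:bintour-cocz}. First I would divide the run into a first phase that finishes once both $1^n$ and $1^{n/2}0^{n/2}$ have appeared in the population, and a second phase that finishes once the entire Pareto front $\{(n/2,n),(n/2+1,n-1),\ldots,(n,n/2)\}$ has been covered. The key point is that, as shown in Theorem~\ref{thm:bintour-cocz}, an objective vector on the Pareto front is never lost once found (this uses the $B^i$ argument together with the population-size bound $N\ge n+2$), and the quantities $\Omax$-analogue for $1^n$ and $\CZmax$ for $1^{n/2}0^{n/2}$ are non-decreasing; so I only need to bound how fast they reach their targets and how fast the front fills in.

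For the first phase I would reuse the COCZ increment analysis verbatim: if $\bmx^*$ is a rank-1, infinite-crowding-distance solution attaining the current $\CZmax$ (resp.\ the current maximum number of $1$-bits), then flipping one of the $n-\CZmax$ "wrong" bits of $\bmx^*$ produces an improvement. The only change from Theorem~\ref{thm:bintour-cocz} is that by Lemma~\ref{lem:stotour-prob} the probability of selecting $\bmx^*$ as a parent is now $q=\Omega(1)$ rather than $\Omega(1/N)$, because $\bmx^*$ belongs to the best $O(1)$ solutions with respect to $\succ_{\mathrm{c}}$. Then, exactly as in Eq.~\eqref{eq:LOmax-onestep-stotour}, aggregating over the $N/2$ reproduction trials and using $N\ge n+2$ gives that $\CZmax$ (and the $1$-bit count) increases by at least $1$ in a generation with probability $\Omega((n-\CZmax)/n)$ — actually $\Omega(1)$ once we account for the $\Omega(1)$ parent-selection probability and $N=\Omega(n)$. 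Summing the resulting geometric waiting times, $\sum_{i}O(n/(n-i))=O(n\log n)$ naively, but with the boosted $\Omega(1)$ per-step success probability one gets $O(n)$ generations for each of $1^n$ and $1^{n/2}0^{n/2}$.

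For the second phase I would argue as in Theorem~\ref{thm:stotour-lotz}: once $1^n$ and $1^{n/2}0^{n/2}$ are both present, Lemma~\ref{lem:stotour-prob} makes each of them a parent with probability $\Omega(1)$, so the pair $(1^n,1^{n/2}0^{n/2})$ is selected as a parent pair in a single reproduction with probability $\Omega(1)$. A one-point crossover of these two, with crossover point chosen among the $n/2-1$ interior positions in the second half, produces a fresh Pareto-optimal point $1^{n/2}0^{j}1^{n/2-j}$; if $i$ front points have been found so far, the probability of hitting a new one in a single reproduction is $\Omega(1)\cdot 0.9\cdot((n/2-1-i)/n)\cdot(1-1/n)^n=\Omega((n/2-1-i)/n)$, mirroring Eq.~\eqref{eq:second-path-stotour}. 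Aggregating over $N/2=\Omega(n)$ reproductions and using $1+a\le e^a$ drives the per-generation success probability to $1-1/(1+\Omega(n-1-i))$, and $\sum_{i=0}^{n/2-2}(1+1/\Omega(n-1-i))=O(n)$ by the harmonic bound. Combining the two phases gives the claimed $O(n)$ expected number of generations.

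I expect the main obstacle to be purely bookkeeping rather than conceptual: one must be careful that the "maintained once found" invariant for COCZ really does require only $N\ge n+2$ (the Pareto front has $n/2+1$ points, each needing a rank-1 slot, plus at most a constant number of extra slots for the crowding-distance boundary solutions), and that the first-phase target is $\{1^n,1^{n/2}0^{n/2}\}$ rather than $\{1^n,0^n\}$ — the point $1^{n/2}0^{n/2}$ is the unique maximizer of $\CZ$, and in the first half the two objectives agree, so the mutation argument for reaching it is the $\CZ$-increment argument rather than a trailing-zeros argument. Beyond that the proof is a routine re-run of Theorem~\ref{thm:stotour-lotz} with the COCZ-specific constants, which is exactly why the authors state it can be omitted.
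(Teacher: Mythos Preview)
Your proposal is correct and matches the paper's (omitted) proof essentially word for word: the paper explicitly states that the argument is Theorem~\ref{thm:stotour-lotz} combined with the COCZ-specific invariants from Theorem~\ref{thm:bintour-cocz}, which is exactly the plan you outline. The only slip is cosmetic: in the second-phase aggregation you write $1-1/(1+\Omega(n-1-i))$ and sum $1/\Omega(n-1-i)$, whereas the per-reproduction probability you derived is $\Omega((n/2-1-i)/n)$, so the correct exponent and summand use $n/2-1-i$; this does not affect the final $O(n)$ bound.
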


\section{Experiments}

\begin{figure}[t!]\centering
	\hspace{0.1em}
	\begin{minipage}[c]{0.48\linewidth}\centering
		\includegraphics[width=1\linewidth]{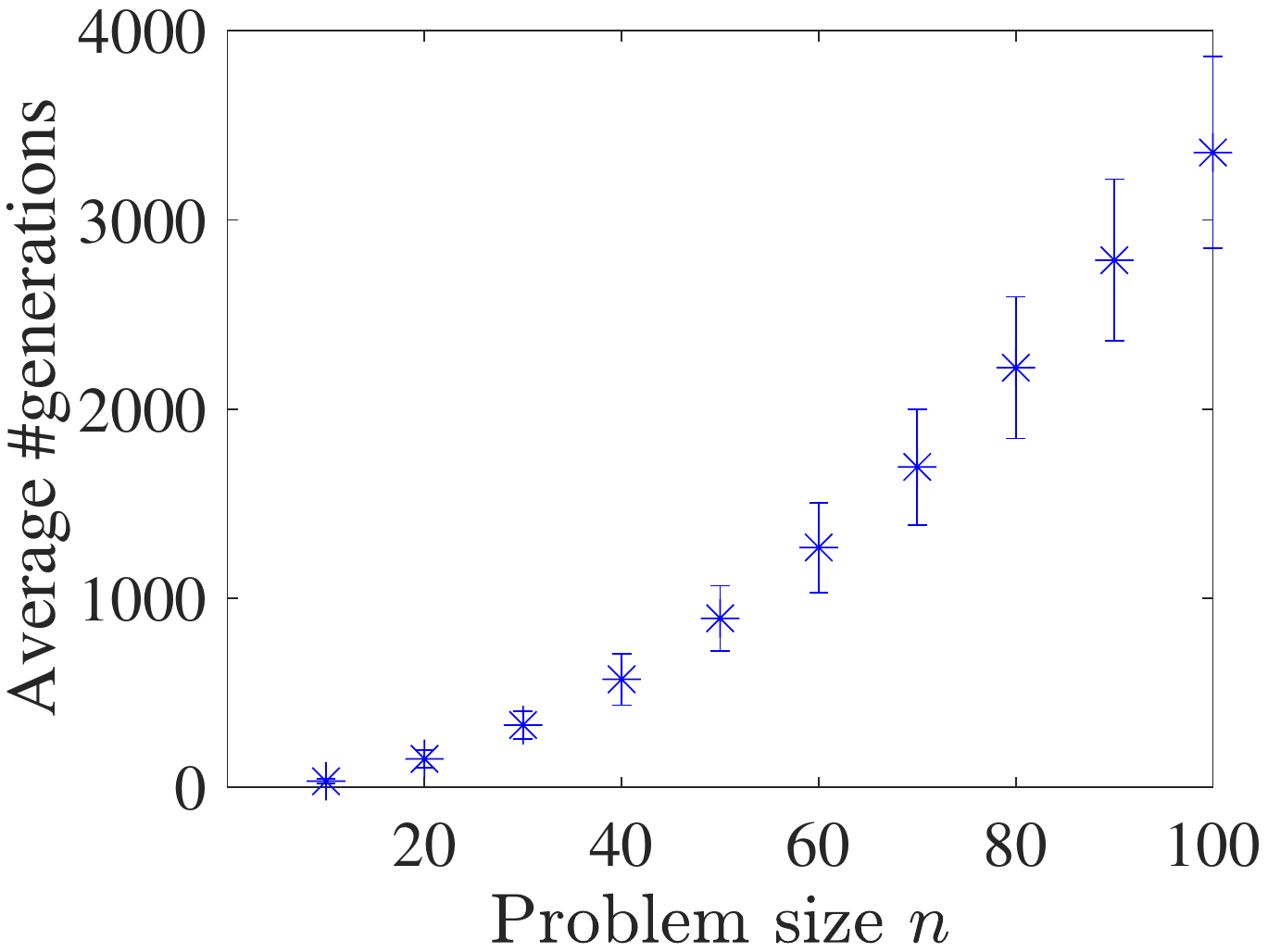}
	\end{minipage}
	\begin{minipage}[c]{0.48\linewidth}\centering
		\includegraphics[width=1\linewidth]{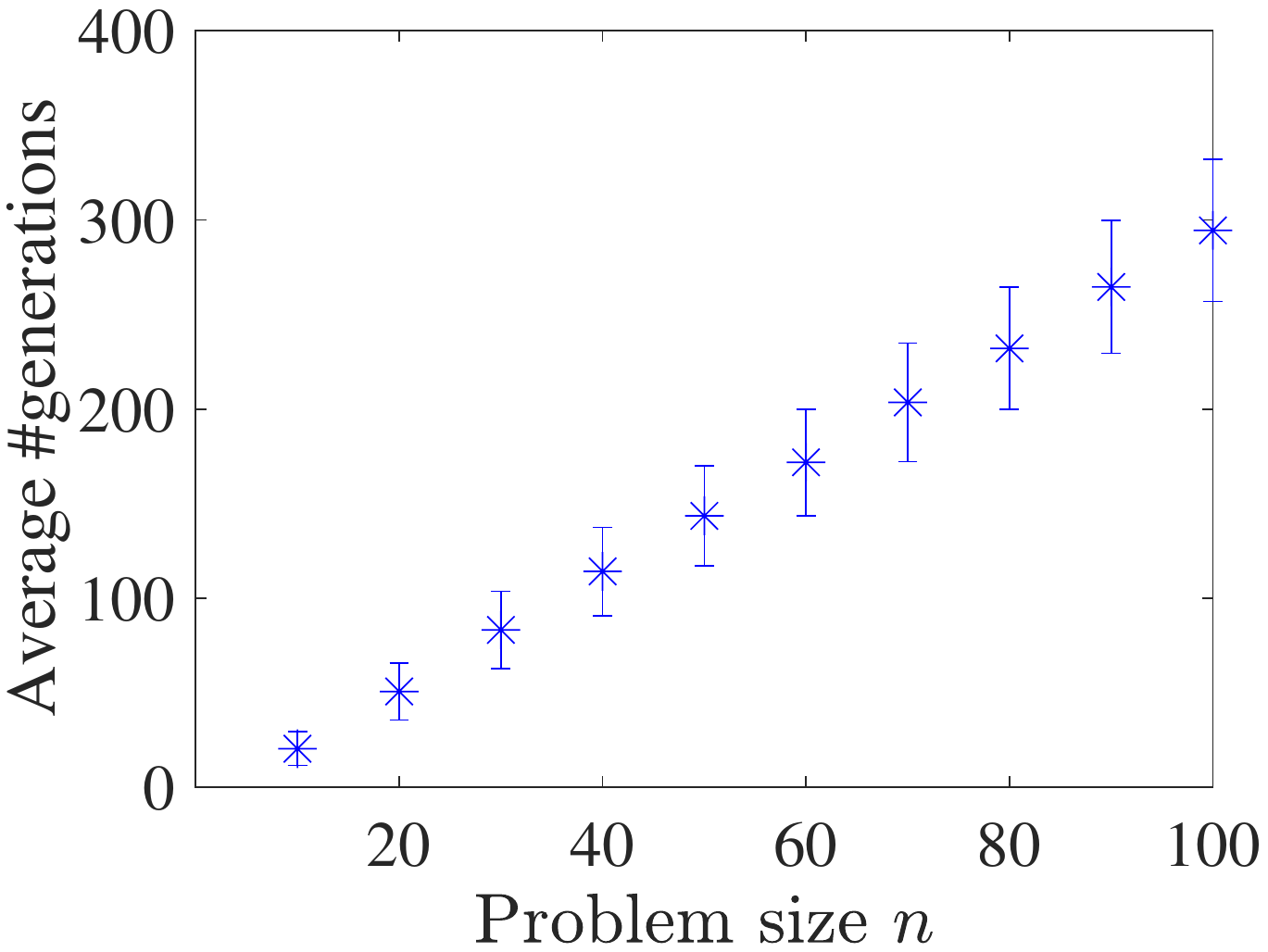}
	\end{minipage}\\\vspace{0.3em}
	\begin{minipage}[c]{1\linewidth}\centering
		\small(a) \text{LOTZ}
	\end{minipage}\\\vspace{1em}
	\begin{minipage}[c]{0.48\linewidth}\centering
		\includegraphics[width=1\linewidth]{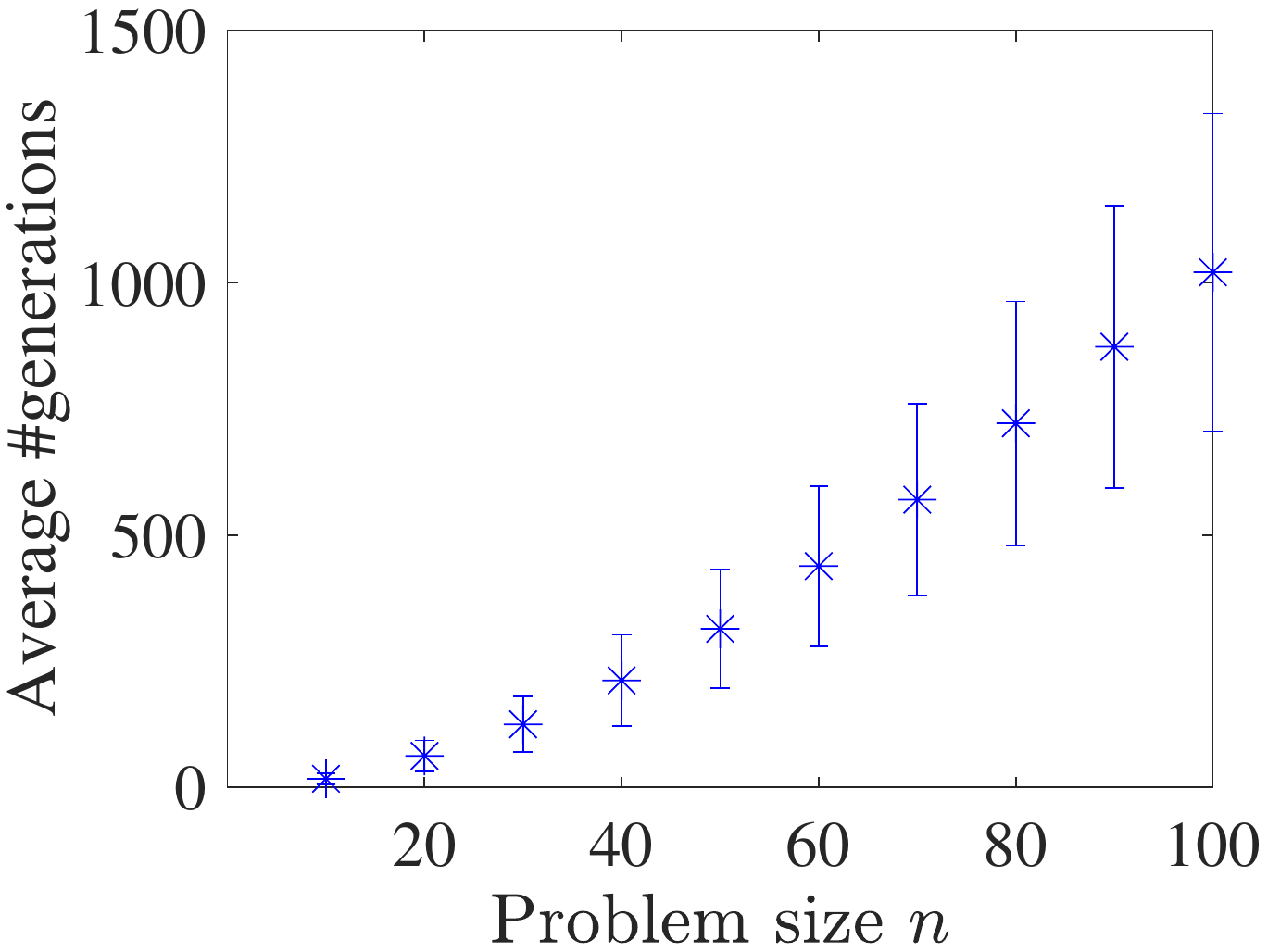}
	\end{minipage}
	\begin{minipage}[c]{0.48\linewidth}\centering
		\includegraphics[width=1\linewidth]{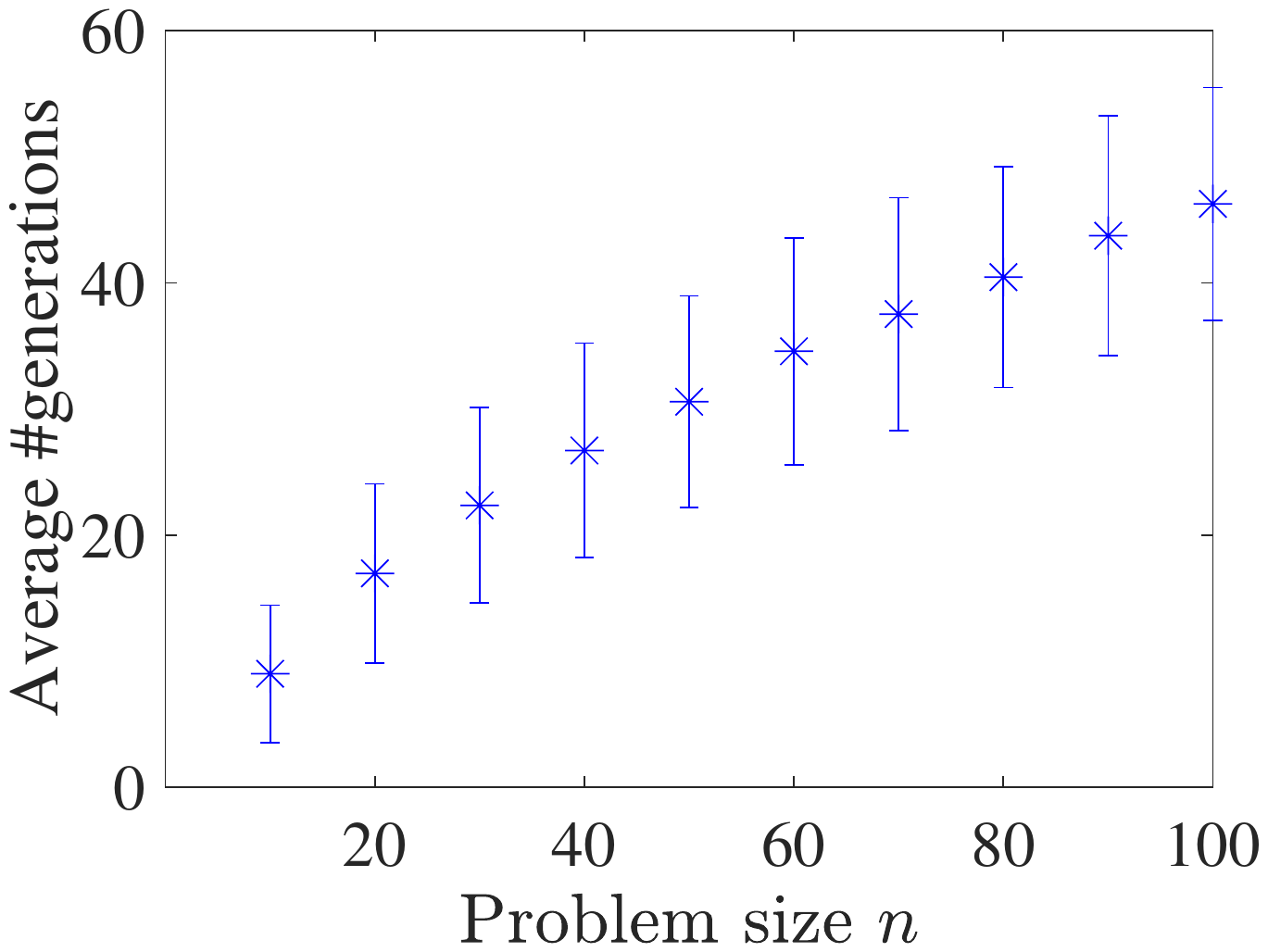}
	\end{minipage}\\\vspace{0.3em}
	\begin{minipage}[c]{1\linewidth}\centering
		\small(b) \text{OneMinMax}
	\end{minipage}\\\vspace{1em}
	\begin{minipage}[c]{0.48\linewidth}\centering
		\includegraphics[width=1\linewidth]{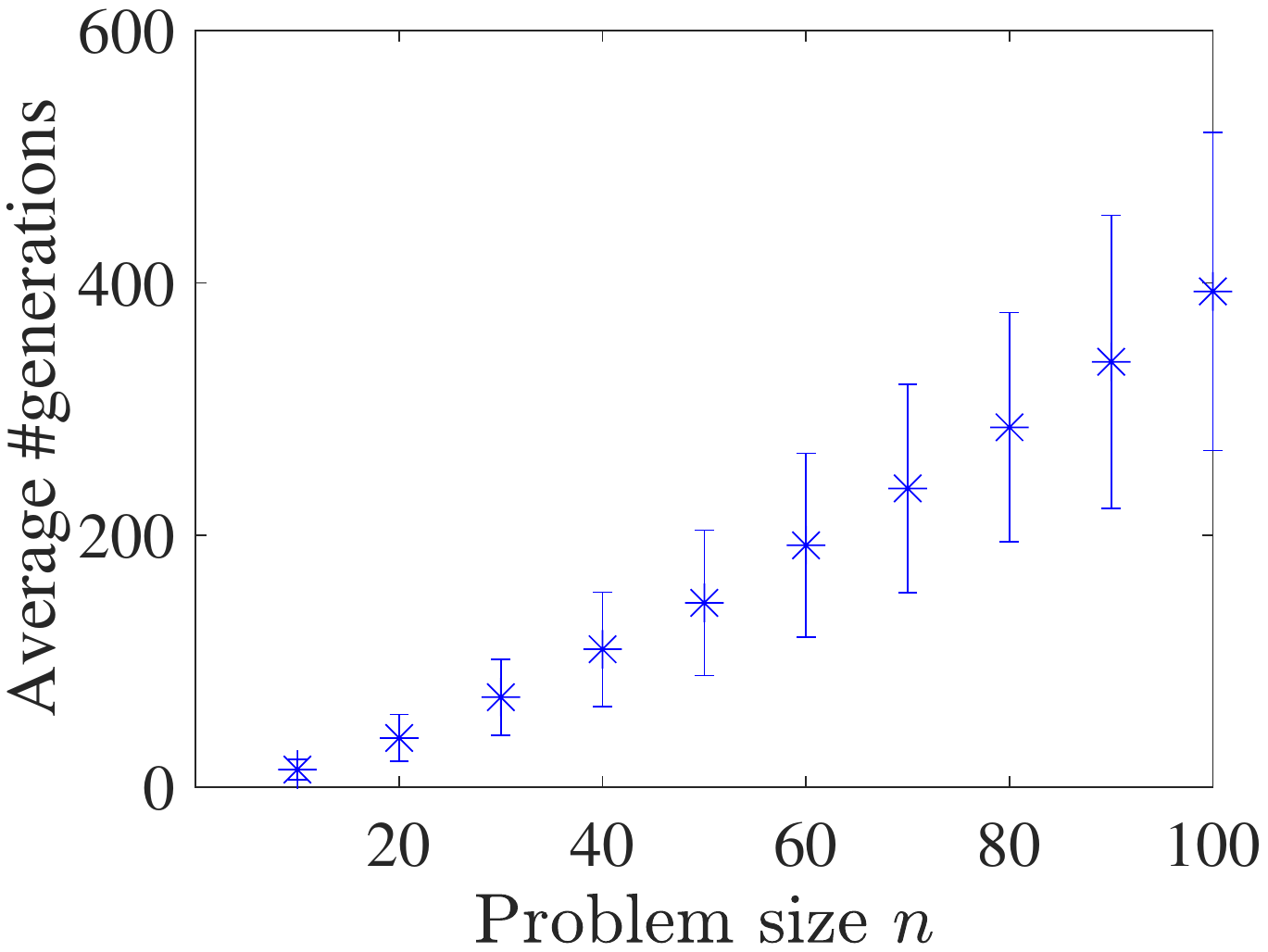}
	\end{minipage}
	\begin{minipage}[c]{0.48\linewidth}\centering
		\includegraphics[width=1\linewidth]{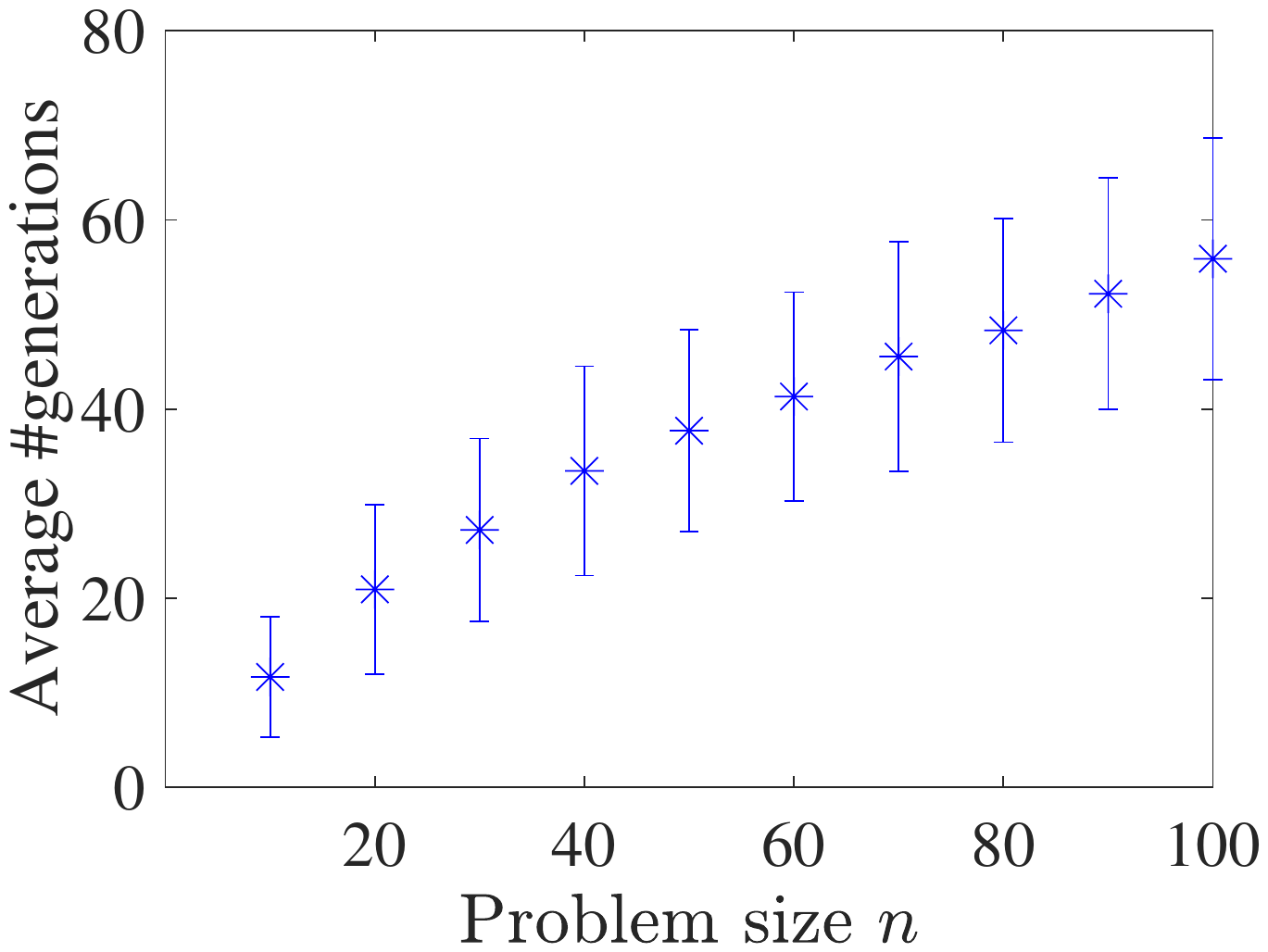}
	\end{minipage}\\\vspace{0.3em}
	\begin{minipage}[c]{1\linewidth}\centering
		\small(c) \text{COCZ}
	\end{minipage}\\\vspace{1em}
	\caption{Average \#generations of the NSGA-II 
		using binary tournament selection or stochastic tournament selection 
		for solving the LOTZ, OneMinMax and COCZ  problems.
		Left subfigure: average \#generations of the NSGA-II using binary tournament selection vs. problem size $n$;
		right subfigure: average \#generations of the NSGA-II using stochastic tournament selection vs. problem size $n$.}\label{fig:nsgaii-two-methods}
\end{figure}

 In the previous sections, we have proved that when binary tournament selection is used in the NSGA-II, the expected number of generations is $O(n^2)$ for LOTZ, and $O(n\log n)$ for OneMinMax and COCZ; when stochastic tournament selection is used, the expected number of generations can be improved to $O(n)$ for all the three problems. 
  But as the lower bounds on the running time have not been derived, the comparison may be not strict. Thus, we conduct experiments to examine the tightness of these upper bounds.
  
  For each problem,  we examine the performance of NSGA-II when the problem size $n$ changes from 10 to 100, with a step of 10. On each problem size $n$, we run the NSGA-II 1000 times independently, and record the number of generations until the Pareto front is found. Then, the average number of generations and the standard deviation of the 1000 runs are reported in Figure~\ref{fig:nsgaii-two-methods}.

From the left subfigure of Figure~\ref{fig:nsgaii-two-methods}(a), we can observe that the average number of generations increases by a factor of nearly four when the problem size $n$ doubles.  Thus, the average number of generations is approximately $\Theta(n^2)$,  implying that the upper bound $O(n^2)$ derived in Theorem~\ref{thm:bintour-lotz} is tight. By the right subfigure of Figure~\ref{fig:nsgaii-two-methods}(a),  the average number of generations is clearly a  linear function of $n$, which implies that the upper bound $O(n)$ derived in Theorem~\ref{thm:stotour-lotz} is also tight.  As the problem size $n$ increases, the average number of generations in the left subfigures of Figure~\ref{fig:nsgaii-two-methods}(b) and Figure~\ref{fig:nsgaii-two-methods}(c) both increases at a faster pace, implying that the expected number of generations is both $\omega(n)$, and thus the upper bound $O(n\log n)$ derived in Theorems~\ref{thm:bintour-omm} and~\ref{thm:bintour-cocz} is almost tight. 
From the right subfigures of Figure~\ref{fig:nsgaii-two-methods}(b) and Figure~\ref{fig:nsgaii-two-methods}(c), we can observe that the average number of generations increases by about 40\% when the problem size $n$ doubles, suggesting that the expected number of generations is approximately $\Omega(n^{1/2})$.

\section{Conclusion}
 In this paper, we theoretically study the running time of the NSGA-II solving three bi-objective problems, LOTZ, OneMinMax and COCZ,  and derive upper bounds that are as same as that of the previously analyzed simple MOEAs, GSEMO and SEMO.  Then, we propose a new parent selection strategy, stochastic tournament selection,  to replace the binary tournament selection strategy of the NSGA-II, and prove that the NSGA-II using the new strategy can find the Pareto front  of the three problems with much less time. Experiments are also conducted to examine the tightness of the upper bonds. 
 In the future, we will analyze the lower bounds on the running time 
 to make the comparison strict, and also conduct a more comprehensive experiment (e.g., increase the problem size $n$ to 500) to better reflect the tendency of the running time.
 It is also interesting and expected to study the running time of the NSGA-II on multi-objective combinatorial optimization problems.

\bibliographystyle{plainnat}
\bibliography{nsgaii-theory-bib}

\end{document}